\newcommand{\RR}{ \mathbb{R} }
\newcommand{\spaceo}{\hspace{2 mm}}
\newcommand{\setsep}{ \spaceo | \spaceo}
\newcommand{\half}{\frac{1}{2}}
\newcommand{\Prob}[1]{\mathbb{P}\left( #1 \right)}
\newcommand{\Probu}[2]{\mathbb{P}_{#1}\left( #2 \right)}
\newcommand{\Abs}[1]{\left| #1 \right|}
\newcommand{\Set}[1]{\left\{ #1 \right\}}
\newcommand{\Brack}[1]{\left( #1 \right)}
\newcommand{\SqBrack}[1]{\left[ #1 \right]}
\newcommand{\Exp}[1]{ \mathbb{E} #1}
\newcommand{\Expsubidx}[2]{ \mathbb{E}_{#1} #2}
\newcommand{\norm}[1]{\left\|#1\right\|}
\newcommand{\Ind}[1]{ \mathbbm{1}_{\Set{#1}} }
\newcommand{\eps}{\varepsilon}
\newlength{\dhatheight}
\newcommand{\cond}{\vert}
\newcommand\indep{\protect\mathpalette{\protect\independenT}{\perp}}
\def\independenT#1#2{\mathrel{\rlap{$#1#2$}\mkern2mu{#1#2}}}
\newcommand{\mcX}{\mathcal{X}}
\newcommand{\mcY}{\mathcal{Y}}
\newcommand{\mcZ}{\mathcal{Z}}
\newcommand{\mcA}{\mathcal{A}}
\newcommand{\notemk}[1]{{\color{blue}{ #1 }}}
\newcommand{\simplxY}{\Delta_{\mcY}}
\newtheorem{theorem}{Theorem}[section]
\newtheorem{definition}{Definition}[section]
\newtheorem{lemma}{Lemma}[section]
\author[1]{Mark Kozdoba \textsuperscript{*}}
\author[1]{Binyamin Perets \textsuperscript{*}}
\author[1,2]{Shie Mannor}
\affil[1]{Technion - Israel Institute of Technology, Haifa, Israel}
\affil[2]{NVIDIA}
\title{Efficient Fairness-Performance Pareto Front Computation}
\begin{document}
\maketitle

\begin{abstract}
There is a well known intrinsic trade-off between the fairness of a representation and the performance of classifiers derived from the representation. In this paper we propose a new method to compute the optimal Pareto front of this trade off. In contrast to the existing methods, this approach does not require the training of complex fair representation models.

Our approach is derived through three main steps: We analyze fair representations theoretically, and derive several structural properties of optimal representations. We then show that these properties enable a reduction of the computation of the Pareto Front to a compact discrete problem. Finally, we show that these compact approximating problems can be efficiently solved via off-the shelf concave-convex programming methods.

In addition to representations, we show that the new methods may also be used to directly compute the Pareto front of fair classification problems. Moreover, the proposed methods may be used with any concave performance measure. This is in contrast to the existing reduction approaches, developed recently in fair classification, which rely explicitly on the structure of the non-differentiable accuracy measure, and are thus unlikely to be extendable. 

The approach was evaluated on several real world benchmark datasets and compares favorably to a number of recent state of the art fair representation and classification methods.  
\end{abstract}

\section{Introduction}
\label{sec:introduction}
Fair representations are a central topic in the field of Fair Machine Learning, \cite{mehrabi2021survey}, \cite{pessach2022review},\cite{chouldechova2018frontiers}. Since their introduction in \cite{zemel2013learning_fair}, Fair representations have been extensively studied, giving rise to a variety of approaches based on a wide range of modern machine learning methods, such GANs, variational auto encoders, numerous variants of Optimal Transport methods, and direct variational formulations.  See the papers 
\cite{feldman2015certifying}, \cite{madras2018learning}, \cite{gordaliza2019obtaining, zehlike2020matching}, \cite{song2019learning}, \cite{du2020fairness}, \cite{zhao2022inherent}, \cite{jovanovic2023fare}, \cite{dehdashtian2024utility}, for a sample of existing methods.

For a given representation learning problem and a target classification problem, since the fairness constraints reduce the space of feasible classifiers, the best possible classification performance will usually be lower as the fairness constraint becomes stronger. This phenomenon is known as the Fairness-Performance trade-off. 
Assume that we have fixed a way to measure fairness. Then for a given representation learning method, one is often interested in the fairness-performance curve $(\gamma, E(\gamma))$. Here, $\gamma$ is the fairness level, and $E(\gamma)$ is the classification performance of the method at that level. 
The curve $(\gamma, E(\gamma))$ where $E(\gamma)$ is the best possible performance over all representations and classifiers under the constraint is known as the Fairness-Performance Pareto Front.

As indicated by the above discussion, representation learning methods typically involve models with high dimensional parameter spaces, and complex, possibly constrained non-convex optimisation algorithms. 
As such, these methods may be prone to local minima and sensitivity to a variety of hyper-parameters, such as architecture details, learning rates, and even initializations. While the representations produced by such methods may often be useful, it nevertheless may be difficult to decide whether  
their associated Fairness-Performance curve is close to the true Pareto Front.

In this paper we propose a new method to compute the optimal Pareto front, 
which does not require the training of complex fair representation models. In other words, we show that, perhaps somewhat surprisingly, the computation of the Pareto Front can be decoupled from that of the representation, and only relies on learning of much simpler, unconstrained classifiers on the data. To achieve this, we first show that the optimal fair representations satisfy a number of structural properties. While these properties may be of independent interest, here we use them to express the points on the Pareto Front as the solutions of small discrete optimisation problems. These problems, known as \emph{concave minimisation} problems \cite{benson1995concave},  have been extensively studied and can be efficiently solved using modern dedicated optimisation frameworks, \cite{shen2016disciplined}.

We now describe the results in more detail. Let $X \in \mcX$ and $A\in \mcA$ denote the data features and the sensitive attribute, respectively. We assume that $A$ is binary, while $X$ may take values in an arbitrary space $\mcX$, typically with $\mcX = \RR^d$. In addition, we have a \emph{target} variable $Y$, taking values in a finite set $\mcY$. We are then interested in representations that maximise the performance of prediction of $Y$, under the fairness constraint. The representation is denoted by $Z$, and is typically expressed by constructing the conditional distributions $\Prob{Z \cond X=x,A=a}$, for $a\in \Set{0,1}$. The problem setting is illustrated in Figure \ref{fig:representations_main}. 

\begin{figure*}[t] 
    \centering
    \includegraphics[width=0.8\textwidth, height = 3.3cm]{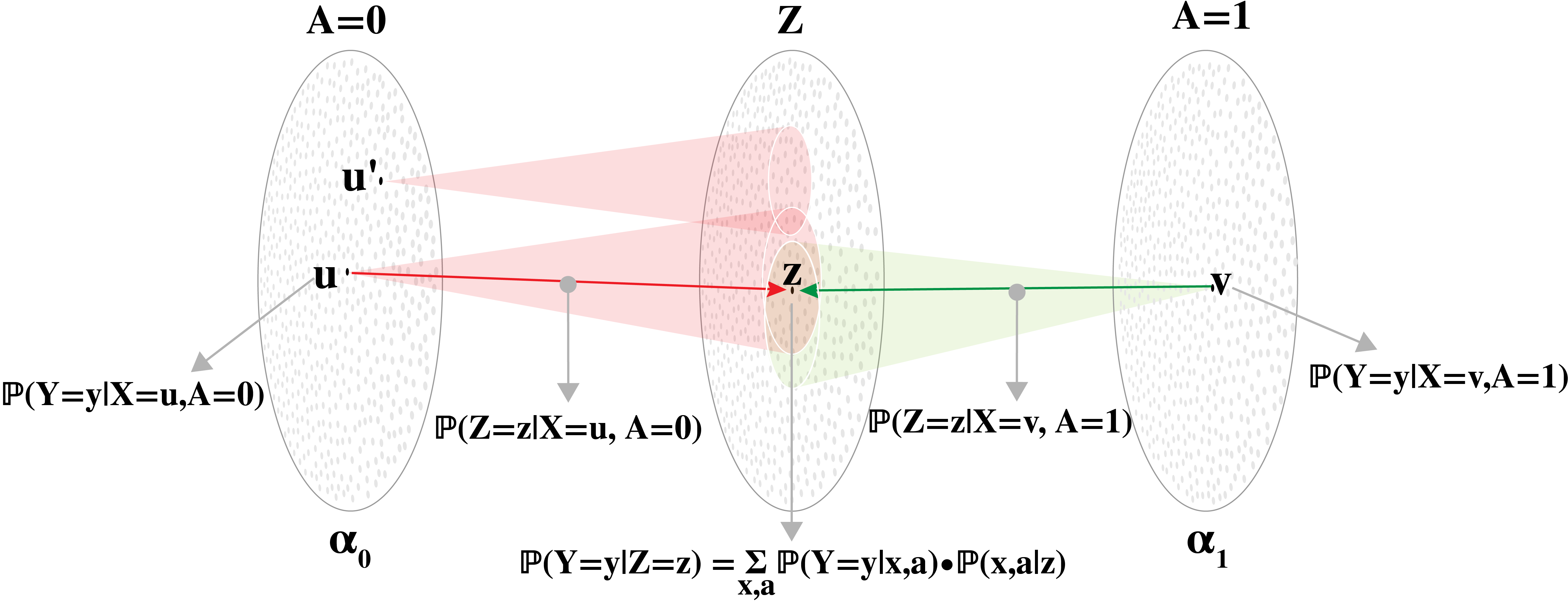} 
    \caption{Fair Representation Problem Setting.}
    \label{fig:representations_main} %
\end{figure*}

Our approach consists of three main steps: We first observe that one can map the data features $(x,a)\in \mcX \times \mcA$ to a much smaller space $\Delta_{\mcY}$ of distributions on the set of label values $\mcY$, without loosing any information necessary for the computation of the Pareto front. The mapping is done by means of the optimal Bayes classifier. 
This result is referred to as Factorization Lemma, Section \ref{sec:classification_and_factorisation}, where the mapping is done via the optimal Bayes classifier.  
Similar arguments 
were recently implicitly used in the study of fair classification tradeoffs, \citep{xian2023fair,wang2023aleatoric},
but were restricted to classification and to accuracy loss (Section \ref{sec:literature}).

The advantage of working with a small space such as 
$\Delta_{\mcY}$ is that it can be easily discretized. For instance, if $Y={0,1}$, then we essentially have 
$\Delta_{\mcY} = [0,1]$, which is descretised trivially. 
We note that other, data dependent discretisation schemes, such as clustering, maybe possible for problems involving highly multi label targets. Alternatively, one can also consider the dataset itself as a grid, a view that is typically taken by
transportation based approaches, e.x. \cite{gordaliza2019obtaining}, \cite{xian2023fair}.

Next, assuming the data is discretized and finite, we ask 
how large the represntation space $\mcZ$ should be, in order to 
support both optimal performance and fairness? For instance, 
we believe the answer to the following question is not apriori obvious: Can representations on infinite spaces be approximated, in terms of performance and fairness, by representations on finite and bounded spaces $\mcZ$? These questions are addressed by the Invertibility Theorem, Section \ref{sec:opt_reprs_properties}, which asserts that all optimal representations may be taken in a certain canonical form, which we term \emph{invertible}. This result, in conjunction with an additional approximation lemma, is used in Section \ref{sec:mifpo_construction} to construct representations with any desired degree of approximation.

Finally, based on these results in Section \ref{sec:mifpo_construction} we also introduce the MIFPO (Model Independent Fairness-Performance Optimization), a discrete optimisation problem that is essentially equivalent to a computation of the fairness-performance tradeoff on a discrete set. We show that in this situation MIFPO is a concave minimisation problem with linear constraints, and we solve it using the disciplined convex-concave programming framework, DCCP, \cite{shen2016disciplined}. 



We evaluate our approach on standard fairness benchmark datasets and compare its fairness-performance curve to multiple state-of-the-art fair representation methods. We also compare MIFPO to the fairness-performance Pareto front of fair \emph{classifiers}\footnote{See Sections \ref{sec:literature} and \ref{sec:classification_and_factorisation} for the relation between our representation framework and fair classification results.}. As expected, MIFPO effectively serves as an upper bound on almost all other algorithms in both cases. 

To summarise, the contributions of this paper are as follows: \textbf{(a)} We derive several new structural properties of optimal fair representations. 
\textbf{(b)} We use these properties to construct a model independent problem, MIFPO, which can approximate the Pareto Front of arbitrary high dimensional data distributions, but is much simpler to solve than direct representation learning for such distributions. \textbf{(c)} We illustrate the approach on real world fairness benchmarks. 

The rest of this paper is organised as follows: Section \ref{sec:literature} discusses the literature and related work. 
In Section \ref{sec:full_theory_section} we discuss the theoretical results, including factorization and the Invertibility Theorem. 
The MIFPO problem construction and the full Pareto Front computation algorithm are provided in Section \ref{sec:reduction_general}. Experimental results are presented in Section \ref{sec:experiments}, and we conclude the paper in Section \ref{sec:conclusion}. All proofs are provided in the Supplementary Material.

\section{Literature and Prior Work}
\label{sec:literature}
We refer to the book \cite{barocas2023fairness}, and surveys \cite{mehrabi2021survey},\cite{du2020fairness}, for a general overview of representations. Tradeoffs in particular where explicitly studied in \cite{song2019learning}, \cite{balunovic2022fair}, \cite{zhao2022inherent}, \cite{jovanovic2023fare},\cite{dehdashtian2024utility}, among others. 

In this paper we use the total variation based fairness constraints, similarly to the line of work in \cite{madras2018learning}, \cite{zhao2022inherent}, \cite{balunovic2022fair}, \cite{jovanovic2023fare}. 
Other constraints used in the literature include entropy based constraints,  \cite{song2019learning}, or RKHS based independence test constraints, \cite{dehdashtian2024utility}.

As discussed earlier, the vast majority of the work above concentrates on finding neural network based fair representations via involved optimization schemes with possible local minima, which may be hard to analyze. This highlights the usefulness of our approach direct approach to the computation of the Pareto front, which has clear theoretical grounding, and in which sources of approximation error are well understood and may be controlled. 

Relations between fairness and performance were studied in 
\cite{zhao2022inherent}. In particular, for \emph{perfectly fair} representations, they derived lower bounds on the accuracy in terms of the difference of the base rates between the groups. However, this work did not introduce new algorithms for the computation of fair representations or of the associated Pareto front. The extension of the considerations in this paper to the full front was carried in \cite{xian2023fair} for classification (see below).

The accuracy fairness tradeoff has also been extensively studied in the context of fair classification (without  representations), see for instance \cite{agarwal2018reductions},
\cite{kim2020fact} , \cite{alghamdi2022beyond}, \cite{xian2023fair}, \cite{wang2023aleatoric}, for a sample of recent approaches.  In particular, the papers \cite{xian2023fair}, \cite{wang2023aleatoric} are state of the art, and are also the most closely related to our methods, among the existing work.

Similarly to our approach, the analysis in these two papers starts with the estimation of the probabilities $\Prob{Y\cond X,A}$, which are then used to compute the constrained performance. 
However, the subsequent steps are different. Crucially, the analysis in both \cite{xian2023fair} and \cite{wang2023aleatoric} relies critically on the properties of the accuracy as the performance metric. Consequently,  it can not be extended to general concave performance measures, such as the standard (minus) log loss, for instance.  Roughly speaking, in the appropriate sense, accuracy largely ignores classification probabilities. This allows the simple description of classifiers as small confusion matrices in \cite{wang2023aleatoric} (extending the approach of \cite{kim2020fact}), and the restriction of the distributions to the vertices of the simplex in \cite{xian2023fair}. The special structure of accuracy is highlighted also in our
Lemma \ref{lemma:cls}, where we show that classifiers with accuracy may be effectively described by representations using only 2 points. We conclude that even when restricted to classification, our approach analyses a fundamentally more complex situation compared to previous work. 
On the other hand, \cite{xian2023fair} and \cite{wang2023aleatoric} support non binary sensitive attributes and group labels, while such an extension for our methods is out of scope for this paper. 
A comparison of computational complexities for these algorithms may be found in Supplementary \ref{sec:computational_complexities}.

\section{Structure of Fair Representations}
\label{sec:full_theory_section}
In this Section we describe several theoretical properties of fair representations. In Section \ref{sec:problem_setting} we introduce the problem setup and the necessary notation. In Section \ref{sec:classification_and_factorisation} we discuss relations to classification with accuracy loss and the factorization result, which allows to reduce the size of the representation space. The Invertibility Theorem is introduced in Section \ref{sec:opt_reprs_properties}.

\subsection{Problem Setting}
\label{sec:problem_setting}
Let $A$ be a binary sensitive variable, and let $X$ be an additional feature random variable, with values in a set 
$\mcX$, typically with $\mcX = \RR^d$. Assume also that there is a target variable $Y$ with finitely many values in a set $\mcY$, jointly distributed with $X,A$. 

A representation $Z$ of $(X,A)$ is defined as a random variable taking values in some space $\mcZ$, with \emph{(i)} distribution given through 
$\Probu{\theta}{Z \cond X,A}$, where $\theta$ are \emph{the parameters of the representation}, and \emph{(ii)} such that conditioned on $(X,A)$, $Z$ is independent of the rest of the variables of the problem. In particular, we have 
\begin{equation}
\label{eq:representation_y_indep_condition}
    Z \indep Y  \spaceo \cond (X,A), 
\end{equation}
where $\indep$ denotes statistical independence. 

Fairness in this paper will be measured by the Total Variation distance. For two distributions, $\mu,\nu$ on $\RR^d$, with densities $f_{\mu},f_{\nu}$, respectively, this distance is defined as 
\begin{align}
\label{eq:tv_norm_definition}
    \norm{\mu - \nu}_{TV} &= \half \sup_{g \text{ s.t. } \norm{g}_{\infty}\leq 1} 
    \int g(x) \cdot \SqBrack{f_{\mu}(x) - f_{\nu}(x)} dx 
    &= \half \int \Abs{f_{\mu}(x) - f_{\nu}(x)} dx.
\end{align}
Note that $\int \Abs{f_{\mu}(x) - f_{\nu}(x)} dx$ is in fact the $L_1$ distance, and the equivalence $\norm{\cdot}_{TV} = \half \norm{\cdot}_{L_1}$ is well known, see \cite{cover2012elements}.

For $a \in \Set{0,1}$, let $\mu_a$ be the distribution of $Z$ given $A=a$, i.e. 
 $\mu_a(\cdot) := \Prob{Z = \cdot \cond A=a}$. 
 We denote the distance induced by the representation as $D_{TV}(Z) = \norm{\mu_0 - \mu_1}_{TV}$, and  
 for $\gamma \geq 0$, we say that the representation $Z$ is $\gamma$-fair iff
\begin{equation}
\label{eq:fairness_condition}
     D_{TV}(Z) = \norm{\mu_0 - \mu_1}_{TV} \leq \gamma \spaceo \spaceo \spaceo \spaceo \text{(Fairness Condition)}.
\end{equation} 
Note that \eqref{eq:fairness_condition} is a quantitative relaxation of the ``perfect fairness'' condition in the sense of statistical parity, which requires $Z \indep A$. Specifically, observe that by definition, $Z \indep A$ iff  \eqref{eq:fairness_condition} holds with $\gamma = 0$ (i.e. $\mu_0 = \mu_1$). In addition,  as shown in \cite{madras2018learning}, \eqref{eq:fairness_condition} implies several other common fairness criteria, in particular, bounds on demographic parity and equalized odds metrics for any downstream classifier built on top of $Z$.

Next, we describe the measurement of information loss in $Y$ due to the representation. 
Let $h: \Delta_{\mcY} \rightarrow \RR$ be a continuous and concave function on the set of probability distributions on $\mcY$, $\Delta_{\mcY}$. 
The quantity $h(\Prob{Y \cond X=x})$ will measure the best possible 
prediction accuracy of $Y$ conditioned on $X=x$, for varying $x$. 
As an example, consider the case of binary $Y$, $\mcY = \Set{0,1}$.  
Every point in $\simplxY$ can be written as $(p,1-p)$ for $p\in [0,1]$, and we may choose $h$ to be the optimal binary classification error, 
\begin{equation}
\label{eq:accuraccy_h_definition}
h((1-p,p)) = min(p,1-p).     
\end{equation}
Another possibility it to use the entropy, $h((1-p,p))=p\log p +(1-p) log (1-p)$. The average uncertainty of $Y$ is given by $\Expsubidx{x \sim X}{h(\Prob{Y \cond X=x})}$. Note that this notion does not depend on a particular classifier, but reflects the performance the \emph{best} classifier can possibly achieve (under appropriate cost). 

The goal of fair representation learning is then to find representations $Z$ that for a given $\gamma\geq 0$ satisfy the constraint \eqref{eq:fairness_condition}, and under that constraint minimize the objective $E = E_{\theta}$ given by
\begin{equation}
\label{eq:performance_cost}
    E_{\theta} = \Expsubidx{z \sim Z}{h(\Prob{Y \cond Z = z})}.
\end{equation}
That is, the representation should minimise the optimal $Y$ prediction error (using $Z$) under the fairness constraint. 

The curve that associates to every $0\leq \gamma \leq 1$ the minimum of \eqref{eq:performance_cost} over all representations $Z$ which satisfy \eqref{eq:fairness_condition} with $\gamma$ is referred to as the \emph{Pareto Front} of the Fairness-Performance trade-off. 

In supplementary material Section \ref{sec:mono_loss_reprs} we show that 
for any representation, $\Expsubidx{z \sim Z}{h(\Prob{Y \cond Z = z})} \geq \Expsubidx{x \sim X}{h(\Prob{Y \cond X=x})}$, i.e. representations generally decrease or maintain the performance. 

\subsection{Classification and Factorization}
\label{sec:classification_and_factorisation}
In this section we show that the Pareto front of binary classifiers, with accuracy performance and statistical parity fairness measure, can be computed from the Pareto front of representations with total variation fairness measure. In fact, Lemma \ref{lemma:cls} below states that both Pareto fronts amount to the same curve. 
As discussed in Section \ref{sec:introduction}, this equivalence implies that MIFPO can be used to evaluate fair classifiers, in addition to fair representations.

For a binary classifier $\hat{Y}$ of $Y$, with $(X,A)$ as features. The prediction error is defined as usual by $\epsilon(\hat{Y}) := \Prob{\hat{Y} \neq Y}$. 
The \emph{statistical parity} of $\hat{Y}$ is defined as 
\begin{equation}
    D_{SP}(\hat{Y}) := \Abs{\Prob{\hat{Y} = 1\cond A = 1} - \Prob{\hat{Y} = 1 \cond A = 0} }.
\end{equation}

\begin{lemma}
\label{lemma:cls}
    Let $\hat{Y}$ be a classifier of $Y$, let the representation uncertainty measure be given by \eqref{eq:accuraccy_h_definition}. Then there is a representation 
    given by a random variable $Z$ on a set $\mcZ$ with $\Abs{\mcZ} = 2$, 
    such that 
\begin{align}
    \Expsubidx{z \sim Z}{h(\Prob{Y \cond Z = z})} \leq \epsilon(\hat{Y}) 
    \text{ and } \norm{\mu_0 - \mu_1}_{TV} \leq D_{SP}(\hat{Y}). 
\end{align}    
Conversely, for any given representation $Z$, there is a classifier 
$\hat{Y}$ of $Y$ as a function of $Z$ (and thus of $(X,A)$), such that 
\begin{align}
    \epsilon(\hat{Y}) \leq \Expsubidx{z \sim Z}{h(\Prob{Y \cond Z = z})}   
    \text{ and } D_{SP}(\hat{Y}) \leq \norm{\mu_0 - \mu_1}_{TV}.     
\end{align}
\end{lemma}
The Proof of Lemma \ref{lemma:cls} is presented in Supplementary Material Section \ref{supp:fair_classifiers}. 

We now describe the Factorization result.  
Let $f^*:\mcX \times \mcA \rightarrow \Delta_{\mcY}$ be the Bayes optimal classifier of $Y$ given $X,A$. That is, for every 
$x\in \mcX, a\in \mcA$, $f^*(x,a)$ is the conditional distribution of $Y$ given $x,a$, i.e. $f^*(x,a) = \Prob{Y=\cdot \cond X=x,A=a}$.  Denote by $(X',A)$ a new pair of random variables, taking values in $\Delta_{\mcY}\times A$, given by 
$(X',A) = (f^*(X,A),A)$. 
\begin{lemma}[Factorization]
\label{lem:factorization}
    For any representation $Z$ of $(X,A)$, there is a representation $Z'$ of $(X',A)$, such that 
    \begin{equation}
        \Expsubidx{z' \sim Z'}{h(\Prob{Y \cond Z' = z'})} \leq 
        \Expsubidx{z \sim Z}{h(\Prob{Y \cond Z = z})} 
        \text{ and }
        D_{TV}(Z') \leq D_{TV}(Z).
    \end{equation}
\end{lemma}
In words, for every representation $Z$, we can find a representation $Z'$ that only accesses $(x,a)$ through the value $f^*(x,a)$, and is at least as good in terms of both fairness and performance. Equivalently, this means that any two points $(x_1,a)$ and $(x_2,a)$ with coinciding conditional $Y$ distribution may be treated as identical for the purposes of constructing optimal representations. As a result, to find optimal tradeoffs, we can only consider the representations $Z'$ on the small space $\Delta_{\mcY}\times \mcA$, rather than $Z$ on the much bigger space $\mcX \times \mcA$.

Observations related to Lemma \ref{lem:factorization} were made in the context of classification in \cite{kim2020fact},\cite{xian2023fair}, and \cite{wang2023aleatoric}, which also start from the Bayes optimal classifier. Lemma \ref{lem:factorization} generalizes these observations to representations and to general losses. The proof may be found in Supplementary \ref{sec:factorization_proofs}.

\subsection{The Invertibility Theorem}
\label{sec:opt_reprs_properties}
In this section we define the notion of invertibility for representations, and show that considering invertible representations is sufficient for computing the Pareto front. 

A representation $Z$ on a set $\mcZ$ is \emph{invertible} if for every $z\in \mcZ$ and every $a\in \Set{0,1}$, there is at most one $x\in \mcX$ such that $\Prob{Z=z\cond X=x,A=a}>0$.   In words, a representation is invertible, if any given $z$ can be produced by at most two original features $(x,a)$, and at most one for each value $a$. For $z \in \mcZ$, we say that an 
$(x,a)$ is a \emph{parent} of $z$ if $\Prob{Z=z\cond X=x,A=a}>0$. 

\begin{theorem}
\label{lem:invertibility_lemma}
Let $Z$ be any representation of $(X,A)$ on a set $\mcZ$. 
Then there exists an invertible representation $Z'$ of $(X,A)$, 
on some set $\mcZ'$, such that 
\begin{align}
\label{eq:invertibility_lemma_eq}
    \Expsubidx{z' \sim Z'}{h(\Prob{Y \cond Z' = z'})} 
    \leq \Expsubidx{z \sim Z}{h(\Prob{Y \cond Z = z})} 
    \text{ and } 
    D_{TV}(Z') = D_{TV}(Z).
\end{align}
\end{theorem}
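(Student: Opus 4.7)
The plan is to build the invertible $T'$ by explicitly splitting each $z \in \mcZ$ into copies indexed by its parent pairs. For $z \in \mcZ$, write $P_0(z) = \Set{u \in S_0 : T_0(z,u) > 0}$ and $P_1(z) = \Set{v \in S_1 : T_1(z,v) > 0}$. I replace $z$ by one new atom $z_{u,v}$ for every $(u,v) \in P_0(z) \times P_1(z)$, and route all mass from $u$ through $z$ only onto atoms of the form $z_{u,v'}$, and all mass from $v$ only onto atoms of the form $z_{u',v}$. By construction each $z_{u,v}$ has at most one parent in $S_0$ (namely $u$) and one in $S_1$ (namely $v$), so $T'$ will be invertible. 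The degenerate $z$'s where $P_0(z) = \emptyset$ or $P_1(z) = \emptyset$ are handled by simply creating one copy per element of the single nonempty parent set.

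Concretely, working with the joint law $p(s,z) := \zeta(s)\, T_{a(s)}(z,s)$ and the $Z$-marginal $\nu(z) := \sum_s p(s,z)$, the splitting I would use is the independent coupling
\[
p'(u,\, z_{u,v}) \;=\; \frac{p(u,z)\, p(v,z)}{\sum_{v'\in P_1(z)} p(v',z)},\qquad p'(v,\, z_{u,v}) \;=\; \frac{p(u,z)\, p(v,z)}{\sum_{u'\in P_0(z)} p(u',z)},
\]
with $p'(s, z_{u,v}) = 0$ for all other $s$. A one-line check gives $\sum_{v'} p'(u, z_{u,v'}) = p(u,z)$ and $\sum_{u'} p'(v, z_{u',v}) = p(v,z)$, so the $s$-marginals over $\mcZ'$ still sum to $\zeta(s)$; hence $T'$ is a valid stochastic kernel with the correct conditional structure.

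Fairness is preserved by a direct cancellation. Using $\sum_{u} p(u,z) = \alpha_0 \mu_0(z)$ and $\sum_{v} p(v,z) = \alpha_1 \mu_1(z)$, the formulas above yield
\[
\mu'_0(z_{u,v}) - \mu'_1(z_{u,v}) \;=\; \frac{p(u,z)\, p(v,z)}{\alpha_0\,\alpha_1}\cdot \frac{\mu_0(z) - \mu_1(z)}{\mu_0(z)\,\mu_1(z)},
\]
so the sign of this difference is the same for every $(u,v)$ in the split of a fixed $z$. Taking absolute values and summing over $(u,v) \in P_0(z) \times P_1(z)$ collapses to exactly $|\mu_0(z) - \mu_1(z)|$; summing then over $z$ gives $\norm{\mu'_0 - \mu'_1}_{TV} = \norm{\mu_0 - \mu_1}_{TV}$.

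For the cost inequality, the key identity is that on each block of splits of a fixed $z$,
\[
\sum_{(u,v)} \nu'(z_{u,v}) = \nu(z), \qquad \sum_{(u,v)} \frac{\nu'(z_{u,v})}{\nu(z)}\,\Prob{Y \cond Z' = z_{u,v}} \;=\; \Prob{Y \cond Z = z},
\]
which follows from $\Prob{Y \cond Z' = z_{u,v}} = [p'(u,z_{u,v})\rho_u + p'(v,z_{u,v})\rho_v]/\nu'(z_{u,v})$ together with the marginal identities above. Concavity of $h$ then yields $\nu(z) h(\Prob{Y \cond Z = z}) \geq \sum_{(u,v)} \nu'(z_{u,v}) h(\Prob{Y \cond Z' = z_{u,v}})$, and summing over $z$ produces the inequality in \eqref{eq:invertibility_lemma_eq}. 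The one bit of care required is the degenerate $z$ where $\mu_0(z) = 0$ or $\mu_1(z) = 0$, which makes one of the denominators above vanish; there the copies are defined directly as one atom per element of the unique nonempty parent set, and both the TV and concavity arguments specialise trivially. I view this as the only real bookkeeping obstacle; the rest of the proof is essentially forced once the independent-coupling split is written down.
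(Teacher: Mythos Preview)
Your argument is correct and takes a genuinely different route from the paper. The paper proceeds inductively: it picks a single atom $z$ with at least two parents on one side, peels off one parent $x$ into a new atom $z'$ (with the $S_1$-mass scaled by $\kappa = \Prob{x \cond z, a=0}$), leaves the remaining parents on $z''$, verifies the analogues of your two key identities for this one split, and then iterates until no violations remain. You instead perform a single global refinement, replacing every $z$ at once by its full grid of parent-pair copies $z_{u,v}$ via the product coupling $p'(u,z_{u,v}) \propto p(u,z)p(v,z)$.

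Your construction is cleaner: no induction, and the constant-sign observation for $\mu'_0(z_{u,v}) - \mu'_1(z_{u,v})$ makes the TV equality a one-line telescoping. The paper's step-by-step split makes the mechanism more transparent (each step is a binary Jensen inequality with weights $\kappa, 1-\kappa$) and avoids ever writing down a product over both sides, which may generalise more readily to non-finite $S$. In essence your product split is what one obtains by running the paper's peeling to completion on both sides simultaneously; the same concavity and marginal-preservation facts drive both proofs. Your handling of the degenerate $z$ with $\mu_0(z)=0$ or $\mu_1(z)=0$ is the right patch and matches the paper's brief treatment of the $|V|=0$ case.
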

In words, for every representation, we can find an invertible representation of the same data which satisfies at least as good a fairness constraint, and has at least as good performance as the original. In particular, this implies that when one searches for optimal performance representations, it suffices to only search among the invertible ones. 

The proof proceeds by observing that if an atom $z \in \mcZ$ has more than one parent for a fixed $a$, then one can split this atom into two, with each having less parents. However, the details of this construction are somewhat intricate and the full argument can be found in Section \ref{sec:proof_of_invertibility}.

Although in this paper we concentrate on the case of binary sensitive variable, we note that Theorem \ref{lem:invertibility_lemma} may be extended to multi valued attributes, with a similar argument. In that case, invertibility would mean that every $z\in \mcZ$ would still have at most two parents, $u,v$, corresponding to different values $a,a'$ of $A$.


\section{The Model Independent Optimization Problem}
\label{sec:reduction_general}
In this Section we motivate and introduce the MIFPO optimisation problem, and then discuss the full Pareto front computation procedure starting from the raw data. 

\begin{figure}
    \centering
    \subfigure[]{\includegraphics[width=0.48\textwidth, height=3.5cm]{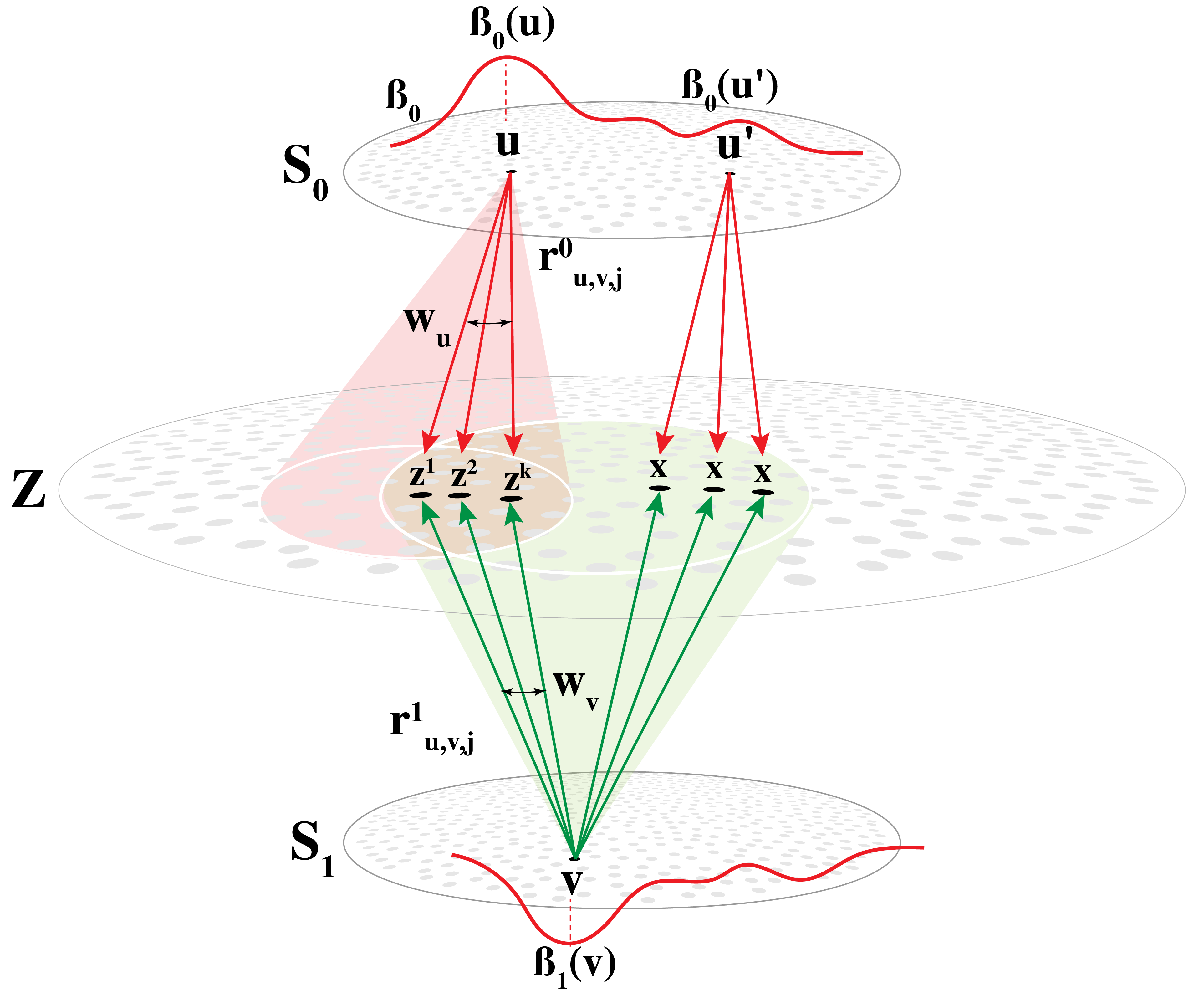}} 
    \subfigure[]{\includegraphics[width=0.48\textwidth, height=3.5cm]{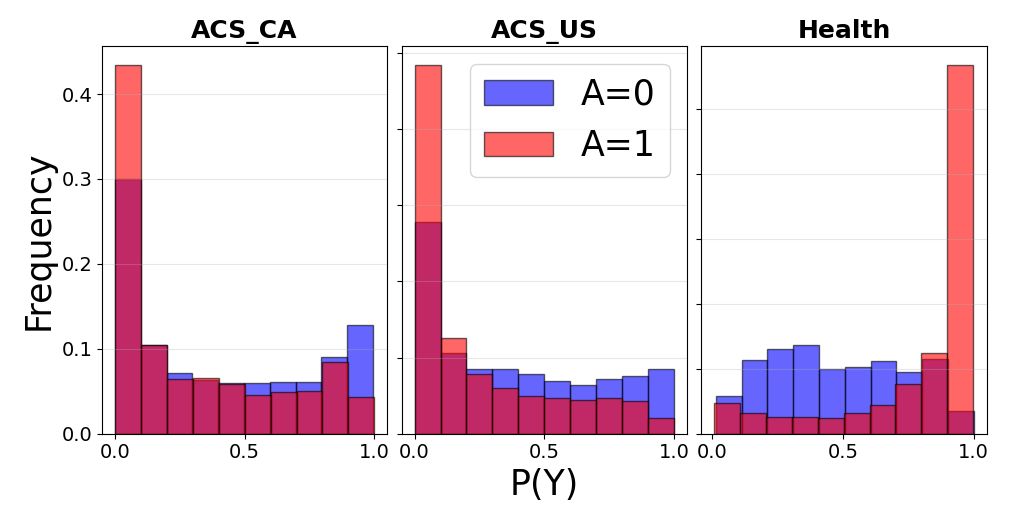}} 
    \caption{(a) The MIFPO Setting (b) Distribution of $P(Y=1|X,A)$ for each group across datasets. }
    \label{fig:technical_details}
\end{figure}

\subsection{MIFPO Definition}
\label{sec:mifpo_construction}
For the purposes of this Section, we assume that the $x$ feature space $\mcX$ is finite. In the next section, Section \ref{sec:practical_construction_details}, we describe how we obtain such finite spaces by using the factorization result and discretizing $\Delta_{\mcY}$.  Note, however, that the full original, possibly high dimensional feature space $\mcX$, is never discretized.  

Write $S_0 = \Set{(x,0) \setsep x \in \mcX} = \mcX \times \Set{0}$, and similarly $S_1 = \mcX \times \Set{1}$, for the two halves of the full feature space, $\mcX \times \mcA = S_0 \cup S_1$.

\textbf{Parameters:} The MIFPO parameters model the data distribution and are as follows: \textbf{(a)} the probability distributions $\beta_0 \in \Delta_{S_0}$ and $\beta_1 \in \Delta_{S_1}$, on $S_0$ and $S_1$ respectively, modeling $\Prob{(X,A) \cond A=0}$ and $\Prob{(X,A) \cond A=1}$ respectively, i.e. the distribution of the data features on each sensitive subgroup.  \textbf{(b)}
The subgroup proportions $\alpha_a = \Prob{A=a}$, and \textbf{(c)} the conditional $Y$ distributions, 
$\rho_u,\rho_v \in \Delta_{\mcY}$, modeling $\rho_u = \Prob{Y=\cdot \cond (X,A) = u}$ when $a=0$ or $\rho_v = \Prob{Y=\cdot \cond (X,A) = v}$ when $a=1$.  

\textbf{Representation Space:} 
Perhaps the first question one can ask when constructing a representation of the data as above is: \emph{How large the representation space should be?} We now answer this question using the theory of Section \ref{sec:full_theory_section}.

Fix an integer $k\geq 2$. The representation space $\mcZ$ will 
be a finite set which can be written as 
\begin{equation}
\label{eq:reprsentation_space_equation}
\mcZ = S_0 \times S_1 \times [k],    
\end{equation}
where $[k]:= \Set{1,2,\ldots,k}$.  
That is, every point $z\in \mcZ$ corresponds to some triplet  $(u,v,j)$, with $u\in S_0, v \in S_1, j\in [k]$. 
To explain this choice, recall that by the Invertibility result, we know that we may consider only invertible representations. In such representations, every point $z\in \mcZ$ is indexed by a pair of parents $(u,v)\in S_0\times S_1$, suggesting that we may index the points by $S_0\times S_1$ to begin with. Next, for a given such pair $(u,v)$, we may ask how many points $z$ should have the same pair $(u,v)$ as their parents? 
In Supplementary Section \ref{sec:uniform_approximation}, we show that using $k$ points for every pair, we can obtain uniform approximation over all representations. That is, given a degree 
of approximation $\eps$, Lemma \ref{lem:uniform_approx} provides a bound on $k$ which is sufficient to obtain such approximation. 
While such a bound would clearly depend on $\eps$, we not that it does note depend on the sizes $\Abs{S_0},\Abs{S_1}$. 
These considerations explain the choice of \eqref{eq:reprsentation_space_equation} as the representation space. We have used $k=5$ in all experiments.

\textbf{Variables:} The variables of the problem model the representation itself. They will be denoted by $r_{u,v,j}^a$ for $(u,v,j) \in \mcZ$ and $a \in \mcA$, and model the probabilities $r_{u,v,j}^a = \Prob{ Z= (u,v,j) \cond (X,A) = s }$, where either $a=0$ and $s=u \in S_0$, or $a=1$ and $s=v \in S_1$ for some $v\in \mcX$. 
That is, for $a=0$, points $u$ transition to $(u,v,j)$ for some $v\in S_1, j \in [k]$,  and similarly for $a=1$, points
$v$ transition to $(u,v,j)$ 
for some $u\in \mcX, j \in [k]$. This notation preserves our  convention that $(u,v,j) \in Z$ has $u$ and $v$ as its only parents. The situation is illustrated in Figure \ref{fig:technical_details}(a). 

Note that the variables represent probabilities, and thus satisfy the following constraints: 
\begin{align}
\label{eq:r_constr1}
  &r_{u,v,j}^a \geq 0,   \spaceo \forall (u,v,j)\in \mcZ, \forall a\in \mcA \\    
  &\sum_{v\in S_1, j\in [k]} r_{u,v,j}^0 = 1    \spaceo \forall u\in S_0  \text{ and }
    \sum_{u\in S_0, j\in [k]} r_{u,v,j}^1 = 1    \spaceo \forall v\in S_1
  \label{eq:r_constr2}
\end{align}

\textbf{Performance Objective and Fairness Constraints:}
With these preparations, we are ready to write the performance cost 
\eqref{eq:performance_cost} in the new notation:
\begin{equation}
\label{eq:Er_perf_cost}
\begin{split}
    E_r = \sum_{z=(u,v,j)}  \SqBrack{\alpha_0 \beta_0(u) r_{u,v,j}^0 + 
    \alpha_1 \beta_1(v) r_{u,v,j}^1} 
     \cdot h\Brack{\frac{\rho_u \alpha_0 \beta_0(u) r_{u,v,j}^0 + \rho_v \alpha_1 \beta_1(v) r_{u,v,j}^1}
    {\alpha_0 \beta_0(u) r_{u,v,j}^0 + \alpha_1 \beta_1(v) r_{u,v,j}^1}}.
\end{split}
\end{equation}
Indeed, observe that due to the structure of our representations, every 
$z$ has two parents, and we have $\Prob{Z=z} = \Brack{\alpha_0 \beta_0(u) r_{u,v,j}^0 + \alpha_1 \beta_1(v) r_{u,v,j}^1}$. Similarly, $\Prob{Y \cond Z=z}$ is computed via 
\begin{equation}
\Prob{Y\cond z} = \sum_{x,a} \Prob{Y\cond x,a,z} \Prob{a,x\cond z} \notag 
= \sum_{x,a}\Prob{Y\cond x,a} \Prob{z\cond a,x} \Prob{a,x} / \Prob{z},
\end{equation}
and substituted inside $h$ to obtain \eqref{eq:Er_perf_cost}. As we show in Supplementary \ref{sec:concavity_of_Er}, the cost \eqref{eq:Er_perf_cost} is a \emph{concave} function of the variables $r$.

We now proceed to discuss the fairness constraint. Recall that we define $\mu_a(z) = \Prob{Z=z\cond A=a}$, for $a\in \Set{0,1}$. For $z=(u,v,j)$ we have then $\mu_a((u,v,j)) =  \beta_a(u) r_{u,v,j}^a$, for $a\in \Set{0,1}$, and we can write 
\begin{equation}
\begin{split}
    D_{TV}(Z) = \norm{\mu_0-\mu_1}_{TV} 
     = \half \sum_z \Abs{\mu_0(z)-\mu_1(z)} 
     = \half \sum_{(u,v,j)} \Abs{\beta_0(u) r_{u,v,j}^0 - \beta_1(v) r_{u,v,j}^1}
\end{split}
\end{equation}
and the Fairness constraint, for a given $\gamma \in [0,1]$, is thus simply 
\begin{equation}
\label{eq:fairness_constraint_construction_notation}
\half \sum_{(u,v,j)} \Abs{\beta_0(u) r_{u,v,j}^0 - \beta_1(v) r_{u,v,j}^1} \leq \gamma.
\end{equation}

We now summarise the full MIFPO problem.
\begin{definition}[MIFPO]  
\label{def:mifpo_def} For a fixed finite ground set $\mcX \times \mcA =S_0\cup S_1$,
the problem parameters are the weight $\alpha_0$, the distributions 
$\beta_0,\beta_1$, on $S_0$ and $S_1$ respectively, and the distributions 
$\rho_x \in \simplxY$ for every $x\in S_0\cup S_1$.
The problem variables are 
$\Set{ r_{u,v,j}^0, r_{u,v,j}^1}_{(u,v,j) \in \mcZ} $ as defined above. We are interested in minimizing the concave function \eqref{eq:Er_perf_cost}, subject to the constraints \eqref{eq:r_constr1}, \eqref{eq:r_constr2}, and \eqref{eq:fairness_constraint_construction_notation}.
\end{definition}

The relationship between MIFPO and the Optimal Transport problem is detailed in Supplementary \ref{sec:mifpo_and_optimal_transport}.

Finally, observe that 
the MIFPO constraints above \emph{linear}, with 
 \eqref{eq:fairness_constraint_construction_notation} being equivalent to two linear inequality constraints. We note that 
 these constraints may be replace by equivalent linear \emph{equality} constraints, via appropriate slack variables, which is more convenient in practice. See Supplementary \ref{sec:linearizing_fairness_constraint} for details. 

\subsection{The Full Algorithm}
\label{sec:practical_construction_details}
In this Section we summarize the full Pareto front computation algorithm, including the estimation of the MIFPO parameters 
$\alpha$,$\beta$ and $\rho$ as discussed above. 

Let $D = \Set{((x_i,a_i),y_i)}_{i\leq N}$ be the dataset, and write $D_a = \Set{((x_i,a_i),y_i) \in D \setsep a_i = a}$, so that $D = D_0 \cup D_1$. 

The algorithm proceeds in the following steps:
\textbf{Step 1:}  we learn the  probability estimators 
$c_0,c_1:\RR^d \rightarrow \Delta_{\mcY}$, separately on $D_0$ and $D_1$. These estimators should approximate the optimal Bayes classifier (Section \ref{sec:classification_and_factorisation}). 
Note that such estimation of probabilities is well studied, and is known as \emph{calibration}, see \cite{niculescu2005predicting}, \cite{kumar2019verified},
\citep{berta2024classifier}.

\textbf{Step 2:} (Discretization and Parameter estimation) For a given integer $L>0$, 
the space $\Delta_{\mcY}$ is discretized into $L$ bins. 
This corresponds to taking the ground sets $S_0,S_1$ in MIFPO to be of size $L$. The data $D_a$ is then mapped into the $L$ bins using $c_a$. The distribution $\beta_a(w)$ is then simply measures the proportion of points $\Set{c_a(x)}_{(x,a)\in D_a}$ that fall into bin $w\leq L$. Finally, for every bin $w$, we choose an arbitrary point inside that bin as the representative distribution, $\rho_w$. The parameters $\alpha_a$ are estimated simply by $\alpha_a = \Abs{D_a}/\Abs{D}$.  Note that for binary $Y$, $\Delta_{\mcY}$ is simply the interval $[0,1]$, which is trivial to discretize.  See Figure \ref{fig:technical_details}(b) for an example of such histograms on real data. We note that one could easily consider more complex discretization schemes, such as clustering, which could be applied efficiently to multi label problems. See Supplementary \ref{sec:computational_complexities} for a discussion.

\textbf{Step 3:} For a given a fairness threshold $\gamma>0$, 
we can now construct the MIFPO instance, Definition \ref{def:mifpo_def}, with $\Abs{S_0} = \Abs{S_1} = L$, the additional approximation parameter $k$, and $\alpha,\beta,\rho$ as discussed above. As discussed in Section \ref{sec:mifpo_construction}, we found it sufficient to use $k=5$ throughout the paper. The MIFPO is then solved using the existing methods, as detailed in Section \ref{sec:experiments}.
The full algorithm is schematically show as Algorithm \ref{alg:param_and_mifpo}, Supplementary \ref{sec:implementation_and_compute_details}.

\section{Experiments}

\label{sec:experiments}
Our approach requires two main computational components: building calibrated classifiers to evaluate $c_a$, and solving the discrete optimization problem described in Section \ref{sec:practical_construction_details}. For the calibrated classifier, we have used XGBoost \citep{chen2015xgboost}, with Isotonic Regression calibration, as implemented in sklearn, \cite{scikit-learn}. 
Next, as discussed in Sections \ref{sec:introduction}, \ref{sec:mifpo_construction}, MIFPO is a concave minimisation problem, under linear constraints. To solve its, we have used the DCCP framework and the associated solver \citep{shen2016disciplined, disciplined_github}, which are based on the combination 
of convex-concave programming (CCP) \cite{lipp2016variations} and disciplined convex programming,  \cite{grant2006disciplined}. 
We note that although local minima are theoretically possible, the above framework is well-established, and the concave structure can  be exploited to allow finding optimal solutions in most practical cases, \citep{shen2016disciplined}.  In particular, our results do not indicate local minima issues. 
However, it may also be worth noting that  MIFPO could in principle be also solved with the classical branch-and-bound methods, \cite{benson1995concave}, which may be slower but do guarantee the global optimum solution.

 Throughout the experiments, we use the missclassification error loss $h$ given by \eqref{eq:accuraccy_h_definition}. Additional implementation details may be found in Supplementary Section \ref{sec:experiments_additional_detail}.

\begin{figure*}
    \centering
    \includegraphics[width=\linewidth, height = 6cm]{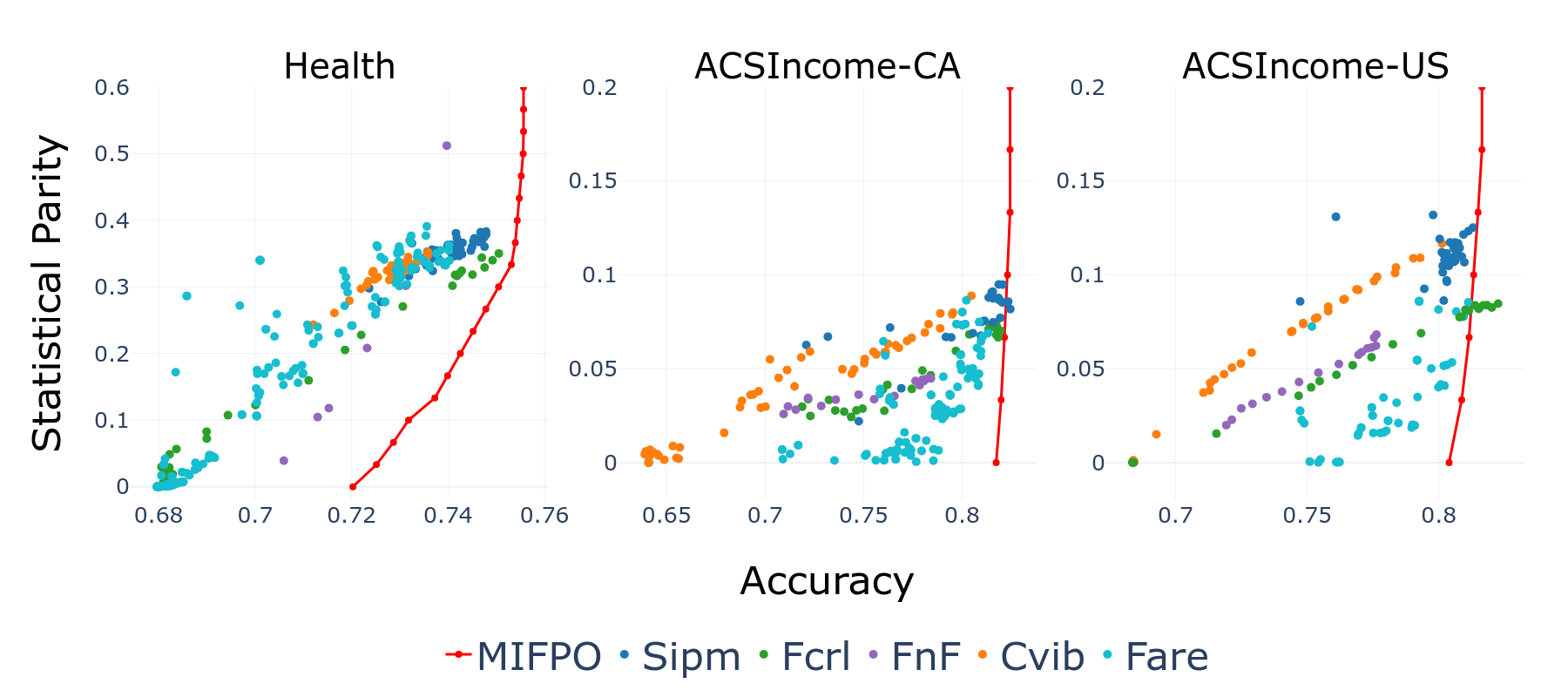}
    \caption{Comparison of fairness-accuracy trade-offs across three benchmark datasets: Health (left), ACSIncome-CA (middle), and ACSIncome-US (right). MIFPO's Pareto front is represented as a solid line with markers. The horizontal axis represents the fairness constraint (statistical parity distance), while the vertical axis shows prediction accuracy.}
    \label{fig:ACC_DP}
\end{figure*}

Our experimental validation of MIFPO encompasses three standard fairness benchmarks: the Health dataset alongside two variants of ACSIncome—one restricted to California (ACSIncome-CA) and another spanning the entire United States (ACSIncome-US). In Figure \ref{fig:ACC_DP} we evaluate MIFPO against five state-of-the-art fair representation techniques: CVIB \citep{moyer2019invariantrepresentationsadversarialtraining}, FCRL \citep{gupta2021controllableguaranteesfairoutcomes}, FNF \citep{fnf_github}, sIPM \citep{kim2023learningfairrepresentationparametric}, and Fare \citep{jovanovic2023fare}. Each competitive approach was tuned across diverse hyperparameter settings to generate a spectrum of representations balancing fairness and accuracy. Moreover, we evaluated MIFPO against several fair-classification methods on multiple datasets as presented in the Supplementary Section \ref{sec:experiments_additional_detail}. 

The empirical results presented in Figure \ref{fig:ACC_DP} demonstrate MIFPO's effectiveness relative to prior approaches. MIFPO consistently achieves performance equal to or superior than the baseline methods across almost all operating points. Furthermore, MIFPO provides a significant methodological advantage through its ability to characterize the complete Pareto frontier. In the figure, MIFPO's performance is visualized as a solid line with points that trace the entire Pareto front, while competing algorithms are represented as individual points corresponding to different hyperparameter configurations. 

\subsection{Implementation}
All evaluations can be found at \url{https://github.com/bp6725/Efficient-Fair-Pareto-Paper}. The MIFPO algorithm's source code is available in the \url{https://github.com/bp6725/FairPareto} repository. The algorithm is also implemented as the "FairPareto" Python package on PyPI, which provides a scikit-learn compatible API for computing optimal fairness-performance Pareto fronts. The package supports two usage modes: a tabular mode with automatic classifier training and calibration given a sensitive attribute column, and a second mode for any data type (images, text) where users provide pre-trained classifiers for each sensitive group. This enables researchers to benchmark their fair classification methods against theoretical optimality with minimal code and make informed decisions about fairness-performance trade-offs. The package is open-source, available on PyPI, and includes comprehensive documentation with examples for both tabular and image data.

\section{Conclusions, Limitations, And Future Work}
\label{sec:conclusion}
In this paper we have introduced new fundamental properties of optimal fair representations. In particular, these are the first theoretical results that allow approximation of the Pareto front for arbitrary concave performance measures.
We have used these results to develop a model independent procedure for the computation of Fairness-Performance Pareto front from data, demonstrated the procedure on real datasets, and have shown that it may be used as a benchmark for other representation learning algorithms. 

We now discuss limitations and a few possible directions for future work. This work primarily concentrated on binary sensitive attribute $A$ and binary $Y$, with the aim to develop the underlying new principles in the simplest case first. 
As discussed earlier (Sections \ref{sec:introduction}, \ref{sec:computational_complexities}), the multi-label case may be treated by more elaborate discretizations. We also noted that the Invertibility Theorem holds for multi valued sensitive attributes as well, which allows to extend the approximation analysis to that case too. Both of these steps, however, would increase the MIFPO problem size. On the other hand, it is also worth noting that this size does not depend directly neither on the feature dimension $d$, nor on the sample size $N$ and thus the problem scales well in that sense. 

In view of these observations, we believe it would be of interest to study the following question on the true complexity of the tradeoff evaluation: Suppose we are given access to the Bayes optimal classifier of the data, $f^*$. This encapsulates, in a sense, most of the ``continuous'' information of the problem.  Then, how scalable can Pareto estimation methods be made theoretically, in terms of $\Abs{\mcY},\Abs{\mcA}$, while still maintaining controllable approximation bounds?

\begin{ack}

This work has received funding from the European Union’s Horizon Europe research and innovation
programme under grant agreement No. 101070568. 

\end{ack}

\bibliography{refs.bib}
\bibliographystyle{apalike}
\newpage
\section*{NeurIPS Paper Checklist}

\begin{enumerate}

\item {\bf Claims}
    \item[] Question: Do the main claims made in the abstract and introduction accurately reflect the paper's contributions and scope?
    \item[] Answer: \answerYes{} 
    \item[] Justification: All claims are supported in the paper.
    \item[] Guidelines:
    \begin{itemize}
        \item The answer NA means that the abstract and introduction do not include the claims made in the paper.
        \item The abstract and/or introduction should clearly state the claims made, including the contributions made in the paper and important assumptions and limitations. A No or NA answer to this question will not be perceived well by the reviewers. 
        \item The claims made should match theoretical and experimental results, and reflect how much the results can be expected to generalize to other settings. 
        \item It is fine to include aspirational goals as motivation as long as it is clear that these goals are not attained by the paper. 
    \end{itemize}

\item {\bf Limitations}
    \item[] Question: Does the paper discuss the limitations of the work performed by the authors?
    \item[] Answer: \answerYes{} 
    \item[] Justification: A discussion may be found in Section \ref{sec:conclusion}, as well as in Section \ref{sec:literature} in context of comparison to other methods. 
    \item[] Guidelines:
    \begin{itemize}
        \item The answer NA means that the paper has no limitation while the answer No means that the paper has limitations, but those are not discussed in the paper. 
        \item The authors are encouraged to create a separate "Limitations" section in their paper.
        \item The paper should point out any strong assumptions and how robust the results are to violations of these assumptions (e.g., independence assumptions, noiseless settings, model well-specification, asymptotic approximations only holding locally). The authors should reflect on how these assumptions might be violated in practice and what the implications would be.
        \item The authors should reflect on the scope of the claims made, e.g., if the approach was only tested on a few datasets or with a few runs. In general, empirical results often depend on implicit assumptions, which should be articulated.
        \item The authors should reflect on the factors that influence the performance of the approach. For example, a facial recognition algorithm may perform poorly when image resolution is low or images are taken in low lighting. Or a speech-to-text system might not be used reliably to provide closed captions for online lectures because it fails to handle technical jargon.
        \item The authors should discuss the computational efficiency of the proposed algorithms and how they scale with dataset size.
        \item If applicable, the authors should discuss possible limitations of their approach to address problems of privacy and fairness.
        \item While the authors might fear that complete honesty about limitations might be used by reviewers as grounds for rejection, a worse outcome might be that reviewers discover limitations that aren't acknowledged in the paper. The authors should use their best judgment and recognize that individual actions in favor of transparency play an important role in developing norms that preserve the integrity of the community. Reviewers will be specifically instructed to not penalize honesty concerning limitations.
    \end{itemize}

\item {\bf Theory assumptions and proofs}
    \item[] Question: For each theoretical result, does the paper provide the full set of assumptions and a complete (and correct) proof?
    \item[] Answer: \answerYes{} 
    \item[] Justification: All results are formally stated and have proofs in the Supplementary. 
    \item[] Guidelines:
    \begin{itemize}
        \item The answer NA means that the paper does not include theoretical results. 
        \item All the theorems, formulas, and proofs in the paper should be numbered and cross-referenced.
        \item All assumptions should be clearly stated or referenced in the statement of any theorems.
        \item The proofs can either appear in the main paper or the supplemental material, but if they appear in the supplemental material, the authors are encouraged to provide a short proof sketch to provide intuition. 
        \item Inversely, any informal proof provided in the core of the paper should be complemented by formal proofs provided in appendix or supplemental material.
        \item Theorems and Lemmas that the proof relies upon should be properly referenced. 
    \end{itemize}

    \item {\bf Experimental result reproducibility}
    \item[] Question: Does the paper fully disclose all the information needed to reproduce the main experimental results of the paper to the extent that it affects the main claims and/or conclusions of the paper (regardless of whether the code and data are provided or not)?
    \item[] Answer: \answerYes{} 
    \item[] Justification: All information on reproducibility is included in Section \ref{sec:experiments} and corresponding Supplementary sections. The full code will be published with the final version of the paper. 
    \item[] Guidelines:
    \begin{itemize}
        \item The answer NA means that the paper does not include experiments.
        \item If the paper includes experiments, a No answer to this question will not be perceived well by the reviewers: Making the paper reproducible is important, regardless of whether the code and data are provided or not.
        \item If the contribution is a dataset and/or model, the authors should describe the steps taken to make their results reproducible or verifiable. 
        \item Depending on the contribution, reproducibility can be accomplished in various ways. For example, if the contribution is a novel architecture, describing the architecture fully might suffice, or if the contribution is a specific model and empirical evaluation, it may be necessary to either make it possible for others to replicate the model with the same dataset, or provide access to the model. In general. releasing code and data is often one good way to accomplish this, but reproducibility can also be provided via detailed instructions for how to replicate the results, access to a hosted model (e.g., in the case of a large language model), releasing of a model checkpoint, or other means that are appropriate to the research performed.
        \item While NeurIPS does not require releasing code, the conference does require all submissions to provide some reasonable avenue for reproducibility, which may depend on the nature of the contribution. For example
        \begin{enumerate}
            \item If the contribution is primarily a new algorithm, the paper should make it clear how to reproduce that algorithm.
            \item If the contribution is primarily a new model architecture, the paper should describe the architecture clearly and fully.
            \item If the contribution is a new model (e.g., a large language model), then there should either be a way to access this model for reproducing the results or a way to reproduce the model (e.g., with an open-source dataset or instructions for how to construct the dataset).
            \item We recognize that reproducibility may be tricky in some cases, in which case authors are welcome to describe the particular way they provide for reproducibility. In the case of closed-source models, it may be that access to the model is limited in some way (e.g., to registered users), but it should be possible for other researchers to have some path to reproducing or verifying the results.
        \end{enumerate}
    \end{itemize}

\item {\bf Open access to data and code}
    \item[] Question: Does the paper provide open access to the data and code, with sufficient instructions to faithfully reproduce the main experimental results, as described in supplemental material?
    \item[] Answer: \answerNo{} 
    \item[] Justification: While we describe all reproducibility details, the full code will be published with the final version of the paper. 
    The datasets used are publicly available. 
    \item[] Guidelines:
    \begin{itemize}
        \item The answer NA means that paper does not include experiments requiring code.
        \item Please see the NeurIPS code and data submission guidelines (\url{https://nips.cc/public/guides/CodeSubmissionPolicy}) for more details.
        \item While we encourage the release of code and data, we understand that this might not be possible, so “No” is an acceptable answer. Papers cannot be rejected simply for not including code, unless this is central to the contribution (e.g., for a new open-source benchmark).
        \item The instructions should contain the exact command and environment needed to run to reproduce the results. See the NeurIPS code and data submission guidelines (\url{https://nips.cc/public/guides/CodeSubmissionPolicy}) for more details.
        \item The authors should provide instructions on data access and preparation, including how to access the raw data, preprocessed data, intermediate data, and generated data, etc.
        \item The authors should provide scripts to reproduce all experimental results for the new proposed method and baselines. If only a subset of experiments are reproducible, they should state which ones are omitted from the script and why.
        \item At submission time, to preserve anonymity, the authors should release anonymized versions (if applicable).
        \item Providing as much information as possible in supplemental material (appended to the paper) is recommended, but including URLs to data and code is permitted.
    \end{itemize}

\item {\bf Experimental setting/details}
    \item[] Question: Does the paper specify all the training and test details (e.g., data splits, hyperparameters, how they were chosen, type of optimizer, etc.) necessary to understand the results?
    \item[] Answer: \answerYes{} 
    \item[] Justification: We supply all the necessary details for the reproduction of the results. Furthermore, the full details will be included as part of the published code. 
    \item[] Guidelines:
    \begin{itemize}
        \item The answer NA means that the paper does not include experiments.
        \item The experimental setting should be presented in the core of the paper to a level of detail that is necessary to appreciate the results and make sense of them.
        \item The full details can be provided either with the code, in appendix, or as supplemental material.
    \end{itemize}

\item {\bf Experiment statistical significance}
    \item[] Question: Does the paper report error bars suitably and correctly defined or other appropriate information about the statistical significance of the experiments?
    \item[] Answer: \answerYes{} 
    \item[] Justification: The methods discussed in this paper produce identical results over multiple runs (std <= 1e-5). 
    We note that for some benchmark methods we have used existing evaluation results; however, the authors of those evaluations claim to have conducted multiple runs to validate stability, and the results presented on the graphs are from different hyperparameters as an integral part of the analysis.
    
    \item[] Guidelines:
    \begin{itemize}
        \item The answer NA means that the paper does not include experiments.
        \item The authors should answer "Yes" if the results are accompanied by error bars, confidence intervals, or statistical significance tests, at least for the experiments that support the main claims of the paper.
        \item The factors of variability that the error bars are capturing should be clearly stated (for example, train/test split, initialization, random drawing of some parameter, or overall run with given experimental conditions).
        \item The method for calculating the error bars should be explained (closed form formula, call to a library function, bootstrap, etc.)
        \item The assumptions made should be given (e.g., Normally distributed errors).
        \item It should be clear whether the error bar is the standard deviation or the standard error of the mean.
        \item It is OK to report 1-sigma error bars, but one should state it. The authors should preferably report a 2-sigma error bar than state that they have a 96\% CI, if the hypothesis of Normality of errors is not verified.
        \item For asymmetric distributions, the authors should be careful not to show in tables or figures symmetric error bars that would yield results that are out of range (e.g. negative error rates).
        \item If error bars are reported in tables or plots, The authors should explain in the text how they were calculated and reference the corresponding figures or tables in the text.
    \end{itemize}

\item {\bf Experiments compute resources}
    \item[] Question: For each experiment, does the paper provide sufficient information on the computer resources (type of compute workers, memory, time of execution) needed to reproduce the experiments?
    \item[] Answer: \answerYes{} 
    \item[] Justification: Yes, relevant information is provided in the Supplementary. We note that all experiments of this paper together run in under day on a standard desktop. 
    \item[] Guidelines:
    \begin{itemize}
        \item The answer NA means that the paper does not include experiments.
        \item The paper should indicate the type of compute workers CPU or GPU, internal cluster, or cloud provider, including relevant memory and storage.
        \item The paper should provide the amount of compute required for each of the individual experimental runs as well as estimate the total compute. 
        \item The paper should disclose whether the full research project required more compute than the experiments reported in the paper (e.g., preliminary or failed experiments that didn't make it into the paper). 
    \end{itemize}
    
\item {\bf Code of ethics}
    \item[] Question: Does the research conducted in the paper conform, in every respect, with the NeurIPS Code of Ethics \url{https://neurips.cc/public/EthicsGuidelines}?
    \item[] Answer: \answerYes{} 
    \item[] Justification: We have examined the Code of Ethics and verified that the research conforms with the code. 
    \item[] Guidelines:
    \begin{itemize}
        \item The answer NA means that the authors have not reviewed the NeurIPS Code of Ethics.
        \item If the authors answer No, they should explain the special circumstances that require a deviation from the Code of Ethics.
        \item The authors should make sure to preserve anonymity (e.g., if there is a special consideration due to laws or regulations in their jurisdiction).
    \end{itemize}

\item {\bf Broader impacts}
    \item[] Question: Does the paper discuss both potential positive societal impacts and negative societal impacts of the work performed?
    \item[] Answer: \answerNA{} 
    \item[] Justification: This work is in Fairness, with obvious societal impact implications. However, while our approach is algorithmically new and can provide practically and empirically better evaluation of fairness-performance tradeoffs, the general societal implications of such tradeoffs are well understood in the field and we believe require no special new consideration in this particular work. 
    
    \item[] Guidelines:
    \begin{itemize}
        \item The answer NA means that there is no societal impact of the work performed.
        \item If the authors answer NA or No, they should explain why their work has no societal impact or why the paper does not address societal impact.
        \item Examples of negative societal impacts include potential malicious or unintended uses (e.g., disinformation, generating fake profiles, surveillance), fairness considerations (e.g., deployment of technologies that could make decisions that unfairly impact specific groups), privacy considerations, and security considerations.
        \item The conference expects that many papers will be foundational research and not tied to particular applications, let alone deployments. However, if there is a direct path to any negative applications, the authors should point it out. For example, it is legitimate to point out that an improvement in the quality of generative models could be used to generate deepfakes for disinformation. On the other hand, it is not needed to point out that a generic algorithm for optimizing neural networks could enable people to train models that generate Deepfakes faster.
        \item The authors should consider possible harms that could arise when the technology is being used as intended and functioning correctly, harms that could arise when the technology is being used as intended but gives incorrect results, and harms following from (intentional or unintentional) misuse of the technology.
        \item If there are negative societal impacts, the authors could also discuss possible mitigation strategies (e.g., gated release of models, providing defenses in addition to attacks, mechanisms for monitoring misuse, mechanisms to monitor how a system learns from feedback over time, improving the efficiency and accessibility of ML).
    \end{itemize}
    
\item {\bf Safeguards}
    \item[] Question: Does the paper describe safeguards that have been put in place for responsible release of data or models that have a high risk for misuse (e.g., pretrained language models, image generators, or scraped datasets)?
    \item[] Answer: \answerNA{} 
    \item[] Justification: We do not see reasonable scenarios in which release of our code may be risky. 
    \item[] Guidelines:
    \begin{itemize}
        \item The answer NA means that the paper poses no such risks.
        \item Released models that have a high risk for misuse or dual-use should be released with necessary safeguards to allow for controlled use of the model, for example by requiring that users adhere to usage guidelines or restrictions to access the model or implementing safety filters. 
        \item Datasets that have been scraped from the Internet could pose safety risks. The authors should describe how they avoided releasing unsafe images.
        \item We recognize that providing effective safeguards is challenging, and many papers do not require this, but we encourage authors to take this into account and make a best faith effort.
    \end{itemize}

\item {\bf Licenses for existing assets}
    \item[] Question: Are the creators or original owners of assets (e.g., code, data, models), used in the paper, properly credited and are the license and terms of use explicitly mentioned and properly respected?
    \item[] Answer: \answerYes{} 
    \item[] Justification: All assets are precisely referenced. 
    \item[] Guidelines:
    \begin{itemize}
        \item The answer NA means that the paper does not use existing assets.
        \item The authors should cite the original paper that produced the code package or dataset.
        \item The authors should state which version of the asset is used and, if possible, include a URL.
        \item The name of the license (e.g., CC-BY 4.0) should be included for each asset.
        \item For scraped data from a particular source (e.g., website), the copyright and terms of service of that source should be provided.
        \item If assets are released, the license, copyright information, and terms of use in the package should be provided. For popular datasets, \url{paperswithcode.com/datasets} has curated licenses for some datasets. Their licensing guide can help determine the license of a dataset.
        \item For existing datasets that are re-packaged, both the original license and the license of the derived asset (if it has changed) should be provided.
        \item If this information is not available online, the authors are encouraged to reach out to the asset's creators.
    \end{itemize}

\item {\bf New assets}
    \item[] Question: Are new assets introduced in the paper well documented and is the documentation provided alongside the assets?
    \item[] Answer: \answerNA{} 
    \item[] Justification: We do not release new assets this time. As mentioned earlier, the implementation of the methods in this paper will be released with the final version of the paper. 
    \item[] Guidelines:
    \begin{itemize}
        \item The answer NA means that the paper does not release new assets.
        \item Researchers should communicate the details of the dataset/code/model as part of their submissions via structured templates. This includes details about training, license, limitations, etc. 
        \item The paper should discuss whether and how consent was obtained from people whose asset is used.
        \item At submission time, remember to anonymize your assets (if applicable). You can either create an anonymized URL or include an anonymized zip file.
    \end{itemize}

\item {\bf Crowdsourcing and research with human subjects}
    \item[] Question: For crowdsourcing experiments and research with human subjects, does the paper include the full text of instructions given to participants and screenshots, if applicable, as well as details about compensation (if any)? 
    \item[] Answer: \answerNA{} 
    \item[] Justification: The paper does not involve crowdsourcing or reesearch with humans. 
    \item[] Guidelines:
    \begin{itemize}
        \item The answer NA means that the paper does not involve crowdsourcing nor research with human subjects.
        \item Including this information in the supplemental material is fine, but if the main contribution of the paper involves human subjects, then as much detail as possible should be included in the main paper. 
        \item According to the NeurIPS Code of Ethics, workers involved in data collection, curation, or other labor should be paid at least the minimum wage in the country of the data collector. 
    \end{itemize}

\item {\bf Institutional review board (IRB) approvals or equivalent for research with human subjects}
    \item[] Question: Does the paper describe potential risks incurred by study participants, whether such risks were disclosed to the subjects, and whether Institutional Review Board (IRB) approvals (or an equivalent approval/review based on the requirements of your country or institution) were obtained?
    \item[] Answer: \answerNA{} 
    \item[] Justification: The paper does not involve crowdsourcing nor research with human subjects.
    \item[] Guidelines:
    \begin{itemize}
        \item The answer NA means that the paper does not involve crowdsourcing nor research with human subjects.
        \item Depending on the country in which research is conducted, IRB approval (or equivalent) may be required for any human subjects research. If you obtained IRB approval, you should clearly state this in the paper. 
        \item We recognize that the procedures for this may vary significantly between institutions and locations, and we expect authors to adhere to the NeurIPS Code of Ethics and the guidelines for their institution. 
        \item For initial submissions, do not include any information that would break anonymity (if applicable), such as the institution conducting the review.
    \end{itemize}

\item {\bf Declaration of LLM usage}
    \item[] Question: Does the paper describe the usage of LLMs if it is an important, original, or non-standard component of the core methods in this research? Note that if the LLM is used only for writing, editing, or formatting purposes and does not impact the core methodology, scientific rigorousness, or originality of the research, declaration is not required.
    \item[] Answer: \answerNA{} 
    \item[] Justification: This research does not involve LLMs as any important, original, or non-standard components.
    \item[] Guidelines:
    \begin{itemize}
        \item The answer NA means that the core method development in this research does not involve LLMs as any important, original, or non-standard components.
        \item Please refer to our LLM policy (\url{https://neurips.cc/Conferences/2025/LLM}) for what should or should not be described.
    \end{itemize}

\end{enumerate}

\newpage
\appendix

\section*{Supplementary Material} 
\addcontentsline{toc}{section}{Supplementary Material} 

\startcontents[sections]
\printcontents[sections]{l}{1}{\setcounter{tocdepth}{2}}

\section{Monotonicity of Loss Under Representations}
\label{sec:mono_loss_reprs}
As discussed in Section \ref{sec:problem_setting}, we observe that representations can not increase the performance of the classifier (i.e decrease the loss). 
\begin{lemma} 
\label{lem:data_processing}
For every $(Y,X,A)$, every representation $Z$ as above, and concave $h$,
\begin{equation}
\Expsubidx{z \sim Z}{h(\Prob{Y \cond Z = z})} 
\geq  \Expsubidx{(x,a) \sim (X,A)} {h(\Prob{Y \cond X= x, A=a})}.            
\end{equation}
\end{lemma}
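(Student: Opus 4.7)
The plan is to establish this as an instance of the classical information processing inequality, using concavity of $h$ and the conditional independence assumption \eqref{eq:representation_y_indep_condition} on the representation.

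First, I would unfold $\Prob{Y \cond Z=z}$ via the law of total probability, conditioning on $(X,A)$. Because $Z \indep Y \cond (X,A)$, we have $\Prob{Y \cond X=x, A=a, Z=z} = \Prob{Y \cond X=x, A=a}$, and so
\begin{equation}
\Prob{Y \cond Z=z} = \sum_{(x,a)} \Prob{Y \cond X=x, A=a}\, \Prob{X=x, A=a \cond Z=z}.
\end{equation}
This exhibits $\Prob{Y \cond Z=z}$ as a convex combination, inside $\simplxY$, of the posteriors $\Prob{Y \cond X=x, A=a}$, with mixture weights $w_z(x,a) := \Prob{X=x, A=a \cond Z=z}$.

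Second, I would apply Jensen's inequality. Since $h$ is concave on $\simplxY$, for each fixed $z$,
\begin{equation}
h\bigl(\Prob{Y \cond Z=z}\bigr) \;\geq\; \sum_{(x,a)} w_z(x,a)\, h\bigl(\Prob{Y \cond X=x, A=a}\bigr).
\end{equation}
Third, I would take expectation over $z \sim Z$ and swap the sums, using that $\Exp{w_Z(x,a)} = \Prob{X=x, A=a}$ (tower property / definition of posterior). This gives
\begin{equation}
\Expsubidx{z \sim Z}{h(\Prob{Y \cond Z = z})} \;\geq\; \sum_{(x,a)} \Prob{X=x, A=a}\, h\bigl(\Prob{Y \cond X=x, A=a}\bigr) \;=\; \Expsubidx{(x,a) \sim (X,A)}{h(\Prob{Y \cond X=x, A=a})},
\end{equation}
which is the claim.

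I do not expect any serious obstacle here; the argument is short and essentially identical to the standard proof of the data processing inequality for $f$-divergences, with $h$ playing the role of the concave Bayes risk. The only mildly delicate point is the continuous-$X$ case, where the summations above become integrals against regular conditional distributions $\Prob{X,A \cond Z=z}$; the Jensen step still applies pointwise in $z$, and Fubini justifies exchanging $\Exp{\cdot}$ with the mixture integral. No additional assumption beyond the stated conditional independence and the continuity/concavity of $h$ is needed.
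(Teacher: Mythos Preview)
Your proposal is correct and follows essentially the same route as the paper's proof: unfold $\Prob{Y\cond Z=z}$ via the law of total probability and the conditional independence \eqref{eq:representation_y_indep_condition}, apply Jensen for concave $h$ pointwise in $z$, then collapse the outer expectation with the tower property. The only cosmetic difference is that the paper writes the mixture as an integral in $x$ (continuous case) while you wrote sums and noted the continuous extension; the logic is identical.
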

Note that the right hand-side above can be considered a ``trivial" representation, $Z=(X,A)$. 

In what follows, to simplify the notation we use expressions of the 
form $\Prob{x,a \cond z}$ to denote the formal expressions $\Prob{X=x,A=a \cond Z=z}$, whenever the precise interpretation is clear from context.

\begin{proof}
For every value $y \in \mcY$, we have 
\begin{flalign}
\label{eq:lem_dat_proc_1}
\Prob{Y=y \cond Z=z} &= 
\sum_a \int dx \spaceo \Prob{Y=y \cond x,a,z} \frac{\partial \Prob{x \cond a,z}}{\partial x} \Prob{a \cond z} \\ 
\label{eq:lem_dat_proc_2}
&= \sum_a \int dx \spaceo \Prob{Y=y \cond x,a} \frac{\partial \Prob{x \cond a,z}}{\partial x} \Prob{a \cond z} \\ 
\label{eq:lem_dat_proc_3}
&= \Expsubidx{(x,a) \sim (X,A)\cond Z=z}{\Prob{Y=y \cond x,a}}.
\end{flalign}
Here, on line \eqref{eq:lem_dat_proc_1}, $\frac{\partial \Prob{x \cond a,z}}{\partial x}$ is the density of $\Prob{x \cond a,z}$ with respect to $dx$. 
Crucially, the transition from \eqref{eq:lem_dat_proc_1} to \eqref{eq:lem_dat_proc_2} is using the property \eqref{eq:representation_y_indep_condition}. 
The transition \eqref{eq:lem_dat_proc_2} to \eqref{eq:lem_dat_proc_3} is a change of notation. 
Using \eqref{eq:lem_dat_proc_3} and the concavity of $h$, we obtain 
\begin{flalign}
\Expsubidx{z \sim Z}{h(\Prob{Y \cond Z = z})} &\geq 
\Expsubidx{z \sim Z}{\Expsubidx{(x,a) \sim (X,A)\cond Z=z}{h\Brack{\Prob{Y \cond x,a}}}} \\
&= \Expsubidx{(x,a) \sim (X,A)} {h(\Prob{Y \cond X= x, A=a})}.            
\end{flalign}
\end{proof}

\section{MIFPO and Optimal Transport}
\label{sec:mifpo_and_optimal_transport}
In this Section we discuss the relation between the MIFPO minimisation problem, Definition \ref{def:mifpo_def}, and the problem of Optimal Transport (OT).  General background on OT may be found in \cite{peyre2019computational}.
We discuss the similarity between OT and the minimisation of \eqref{eq:Er_perf_cost} under the constraint \eqref{eq:fairness_constraint_construction_notation} with  $\gamma=0$, i.e. the perfectly fair case. 
In this case, \eqref{eq:fairness_constraint_construction_notation} is equivalent to the condition $\beta_0(u)r_{u,v,j} = \beta_1(v)r_{v,u,j}$, 
for all $(u,v,j) \in \mcZ$. Next, note that thus in is case the 
expression for $\Prob{Y\cond Z=(u,v,j)}$ is
\begin{equation}
 \frac{\rho_u \alpha_0 \beta_0(u) r_{u,v,j}  + \rho_v \alpha_1 \beta_1(v) r_{v,u,j} }{\alpha_0 \beta_0(u) r_{u,v,j} + 
    \alpha_1 \beta_1(v) r_{v,u,j}} = \frac{\rho_u \alpha_0 + \rho_v \alpha_1}{\alpha_0 + \alpha_1},    
\end{equation}
and \emph{this is independent} of the variables $r$! Therefore we can write 
the cost \eqref{eq:Er_perf_cost} as 
\begin{equation}
\label{eq:pseudo_OT_cost}
    \sum_{u,v} (\alpha_0 + \alpha_1) h\Brack{\frac{\rho_u \alpha_0 + \rho_v \alpha_1}{\alpha_0 + \alpha_1}}
    \half \SqBrack{\sum_{j\leq k} \beta_0(u)r_{u,v,j} + \beta_1(v)r_{v,u,j}}
    .
\end{equation}
Note further that for fixed $u,v$, the different $j$'s in this expression play similar roles and could be effectively merged as a single point. 

The cost \eqref{eq:pseudo_OT_cost} has several similarities with OT.  First, in both problems we have 
two sides, $S_0$ and $S_1$, and we have a certain fixed loss associated with "matching" $u$ and $v$. In case of \eqref{eq:pseudo_OT_cost}, this loss is $(\alpha_0 + \alpha_1) h\Brack{\frac{\rho_u \alpha_0 + \rho_v \alpha_1}{\alpha_0 + \alpha_1}}$, which describes the information loss 
incurred by colliding $u$ and $v$ in the representation. 
And second, similarly to OT, \eqref{eq:pseudo_OT_cost} it is \emph{linear} in the variables $r$. Linear programs are conceptually considerably simpler than minimisation of the concave objective \eqref{eq:Er_perf_cost}.

\section{Proof Of Theorem \ref{lem:invertibility_lemma}}
\label{sec:proof_of_invertibility}
In this Section we prove Theorem \ref{lem:invertibility_lemma}.

To keep the notation and the main argument concise, we prove the result under the assumption that $(X,A)$ is finitely supported. Since no assumptions are made on the cardinalities of the supports, the general measurable case follows by standard approximation arguments.

We now introduce the additional notation necessary for the proof. Let $S_0,S_1$ be finite disjoint sets, where $S_a$ represents the values of $(X,A)$ when $A=a$, for $a\in \Set{0,1}$. Denote $S=S_0 \cup S_1$. We are assuming that there is a probability distribution $\zeta$ on $S$, and $A$ is the random 
variable $A = \Ind{s\in S_1}$.  
$X$ is defined as taking the values $s\in S$, with 
$\Prob{X=s} = \zeta(s)$. 
Further, the variable $Y$ is defined to take values in a finite set $\mcY$, and for every $s\in S$, its conditional distribution is given by $\rho_s\in \Delta_{\mcY}$. That is, $\Prob{Y=y \cond X=s,A=a} = \rho_s(y) = \rho_{s,a}(y)$. \footnote{Note that there is a slight redundancy in the notation 
$\Prob{Y=y \cond X=s,A=a}$ here, since $a$ is determined by $s$. However, to retain compatibility with the standard notation, literature, we specify them both. 
This is similar to the continuous situation, in which although $A$ is technically part of the features, $X$ and $A$ are specified separately.}
This completes the description of the data model. 

For $a\in \Set{0,1}$ we denote $\alpha_a = \Prob{A=a} = \zeta(S_a)$, and 
$\beta_a(s) = \Prob{X=s \cond A=a}$.  Observe that $\beta_a(s) = 0$ if $s \notin S_a$, and $\beta_a(s) = \zeta(s) / \zeta(S_a)$ if $s \in S_a$.

We now describe the representation.
The representation will take values in a finite set $\mcZ$. 
For every $s\in S$ and $z \in \mcZ$, let $T_a(z,s) = \Prob{Z=z \cond X=s, A=a}$  be the conditional probability of representing $s$ as $z$. $T_a$ are sometimes referred as the \emph{transition kernels} of the representation. 
For fixed $(X,A)$, the $T_a$'s fully define the distribution of the representation $Z$ and we shall refer to the representation as $T$ or as $Z$ in interchangeably. 
Finally, for $a\in \Set{0,1}$ denote 
\begin{equation}
\mu_a(z) = T_a \beta_a = \Prob{Z=z \cond A=a} = \sum_{s\in S_a} \beta_a(s) T_a(z,s).    
\end{equation}

With the new notation, a representation $T$ is \emph{invertible} if for every $z\in \mcZ$ and every $a\in \Set{0,1}$, there is at most one $s \in S_a$ such that $T_a(z,s)>0$.   In words, a representation is invertible, if any given $z$ can be produced by at most two original features $s$, and at most one in each of $S_0$ and $S_1$. 

Given a representation $T$ and $z \in \mcZ$, we say that an $s\in S$ is a 
\emph{parent} of $z$ if $T_a(z,s)>0$ for the appropriate $a$.

\begin{proof}[Proof Of Theorem \ref{lem:invertibility_lemma}]
Assume $T$ is not invertible. Then there is a $z \in \mcZ$ which has at least two parents in either $S_0$ or $S_1$. Assume without loss of generality that $z$ has two parents in $S_0$. 
Let
\begin{equation}
    U = \Set{ s\in S_0 \setsep T_0(z,s)> 0}, \spaceo
    V = \Set{ s\in S_1 \setsep T_1(z,s)> 0}
\end{equation}
be the sets of parents of $z$ in $S_0$ and $S_1$ respectively. 
Chose a point $x \in U$, and denote by $U^r = U \setminus \Set{x}$ the remainder of $U$. By assumption we have $\Abs{U^r}\geq 1$. 
We also assume that $\Abs{V}>0$. The easier case $\Abs{V}=0$ will be discussed later. 

Now, we construct a new representation, $T'$. The range of $T'$ will be $\mcZ' = \mcZ \setminus{z} \cup \Set{z',z''}$. That is, we remove $z$ and add two new points. 
Denote 
\begin{equation}
    \kappa = \Prob{x\cond z,a=0} = \frac{\beta_0(x)T_0(z,x)}{\sum_{s\in U}\beta_0(s)T_0(z,s)}.
\end{equation}

Then $T'$ is defined as follows: 
\begin{equation}
\label{eq:Tprime_def}
\begin{cases}
    T'_a(h,s) = T_a(h,s) & \text{for all $a\in \Set{0,1}$, all $s\in S$ and all $h\in \mcZ \setminus \Set{z}$ } \\ 
    T'_0(z',x) = T_0(z,x) & \\ 
    T'_0(z'',u) = T_0(z,u) & \text{ for all $u\in U^r$}\\ 
    T'_1(z',v) = \kappa T_1(z,v) & \text{ for all $v\in V$}\\ 
    T'_1(z'',v) = (1-\kappa) T_1(z,v) & \text{ for all $v\in V$}.
\end{cases}    
\end{equation}
All values of $T'$ that were not explicitly defined in \eqref{eq:Tprime_def} are set to $0$. In words, on the side of $S_0$, we move all the parents of $z$ except $x$ to be the parents of $z''$, while $z'$ will have a single parent, $x$. On the $S_1$ side, both $z'$ and $z''$ will have the same parents as $z$, with transitions multiplied by $\kappa$ and $1-\kappa$ respectively. The multiplication by $\kappa$ is crucial for showing  both inequlaities in \eqref{eq:invertibility_lemma_eq}.

Note that  $T'$ can be though of as splitting $z$ into $z'$ and $z''$, such that $z'$ has one parent on the $S_0$ side, and $z''$ has strictly less parents than $z$ had. Once we show that $T'$ satisfies \eqref{eq:invertibility_lemma_eq}, it is clear that by induction we can continue splitting $T'$ until we arrive at an invertible representation which can no longer be split, thus proving the Lemma. 

In order to show \eqref{eq:invertibility_lemma_eq} for $T'$, we will sequentially show the following claims:
\begin{flalign}
    \label{eq:inv_lem_claims1}
    &\Prob{z} = \Prob{z'} + \Prob{z''} \text{ and } \Prob{z'} = \kappa \Prob{z}    \\ 
    \label{eq:inv_lem_claims2}
    &\Prob{a\cond z} = \Prob{a\cond z'} = \Prob{a\cond z''} \text{ for $a\in \Set{0,1}$} \\ 
    \label{eq:inv_lem_claims3}
    &\begin{cases}
    \text{ for $a=1$} &      
        \Prob{s\cond z,a } = \Prob{s\cond z',a } = \Prob{s\cond z'',a } \spaceo \forall s\in S  \\ 
    \text{ for $a=0$, $s\in U^r$} & 
        \begin{cases}
        \Prob{x\cond z',a} = 1 & \Prob{x\cond z'',a} = 0    \\ 
        \Prob{s\cond z',a} = 0 & 
        \Prob{s\cond z'',a} = (1-\kappa)^{-1} \Prob{s\cond z,a} 
        \end{cases}
    \end{cases} \\ 
    \label{eq:inv_lem_claims4}
    &\Prob{Y \cond z} = \kappa \Prob{Y \cond z'} + (1-\kappa)\Prob{Y \cond z''}\\
    \label{eq:inv_lem_claims5}
    &\Prob{z}h(\Prob{Y \cond z}) \geq 
    \Prob{z'}h(\Prob{Y \cond z'}) + \Prob{z''}h(\Prob{Y \cond z''}) \\ 
    \label{eq:inv_lem_claims6}
    &\Abs{\mu_0(z) - \mu_1(z)} = \Abs{\mu'_0(z') - \mu'_1(z')} + \Abs{\mu'_0(z'') - \mu'_1(z'')}.
\end{flalign}
Here the probabilities involving $z',z''$ refer to the representation $T'$.
Observe that the left hand side of \eqref{eq:inv_lem_claims5} is the contribution of $z$ to the performance cost $\Expsubidx{t\sim Z}{ h(\Prob{Y\cond Z=t})}$ of $T$, while the right hand side of \eqref{eq:inv_lem_claims5} is the contribution of $z',z''$ to the performance cost of $T'$. Since all other elements $t\in \mcZ$ have identical contributions, this shows the first inequality in \eqref{eq:invertibility_lemma_eq}.  
Similarly, 
recall that 
\begin{equation}
\norm{\mu_0 - \mu_1}_{TV} =  \half \sum_{t\in \mcZ} \Abs{\mu_0(t) - \mu_1(t)},
\end{equation}
and thus the left hand side of \eqref{eq:inv_lem_claims6} is the contribution of $z$ to $\norm{\mu_0 - \mu_1}_{TV}$, with the right hand side being the contribution of $z',z''$ to $\norm{\mu'_0 - \mu'_1}_{TV}$, therefore yielding the claim $\norm{\mu'_0 - \mu'_1}_{TV} = \norm{\mu_0 - \mu_1}_{TV}$.

\textbf{Claim \eqref{eq:inv_lem_claims1}:} By definition, 
\begin{flalign}
    \Prob{z'} &= \alpha_0 \beta_0(x) T_0'(z',x) + \alpha_1 \sum_{s\in V} \beta_1(s)T'_1(z',s) \\ 
    \label{eq:inv_lem_claim1_derivation1}
    &= \alpha_0 \beta_0(x) T_0(z,x) + \kappa \alpha_1 \sum_{s\in V} \beta_1(s)T_1(z,s) \\
    &= \kappa \alpha_0 \sum_{s\in U} \beta_0(s) T_0(z,s) + \kappa \alpha_1 \sum_{s\in V} \beta_1(s)T_1(z,s) \\ 
    &= \kappa \Prob{z}.
\end{flalign}
Similarly, by definition we have 
\begin{flalign}
    \Prob{z''} &= 
     \alpha_0 \sum_{s\in U^r} \beta_0(s) T_0(z,s) + (1-\kappa) \kappa \alpha_1 \sum_{s\in V} \beta_1(s)T_1(z,s),
\end{flalign}
and summing this with \eqref{eq:inv_lem_claim1_derivation1}, we obtain 
$\Prob{z} = \Prob{z'} + \Prob{z''}$.

\textbf{Claim \eqref{eq:inv_lem_claims2}:} 
Note that it is sufficient to prove the claim for $a=0$ since the probabilities sum to 1.  Write 
\begin{flalign}
    \Prob{a=0 \cond z} &= \frac{\Prob{a=0,z}}{\Prob{z}}    \\ 
    &= \frac{\alpha_0 \sum_{s\in U} \beta_0(s)T_0(z,s)}{\Prob{z}}    \\ 
    &= \frac{\kappa \cdot \alpha_0 \sum_{s\in U} \beta_0(s)T_0(z,s)}{\kappa \cdot \Prob{z}}    \\     
    &= \frac{ \alpha_0 \beta_0(x)T_0(z,x)}{\Prob{z'}}      \\ 
    &= \Prob{a=0 \cond z'}.
\end{flalign}
Similarly, 
\begin{flalign}
    \Prob{a=0 \cond z} &= \frac{\Prob{a=0,z}}{\Prob{z}}    \\ 
    &= \frac{\alpha_0 \sum_{s\in U} \beta_0(s)T_0(z,s)}{\Prob{z}}    \\ 
    &= \frac{(1-\kappa) \cdot \alpha_0 \sum_{s\in U} \beta_0(s)T_0(z,s)}{(1-\kappa) \cdot \Prob{z}}    \\     
    &= \frac{ \alpha_0 \sum_{s\in U^r} \beta_0(s)T_0(z,s)}{\Prob{z''}}   \\ 
    &= \Prob{a=0 \cond z''}.
\end{flalign}

\textbf{Claim \eqref{eq:inv_lem_claims3}:} 
For $a=1$, let us show $\Prob{s\cond z,a } = \Prob{s\cond z',a }$. 
\begin{flalign}
    \Prob{s\cond z,a=1 } &= 
    \frac{\alpha_1 \beta_1(s) T_1(z,s)}{\alpha_1 \sum_{s'\in V}\beta_1(s') T_1(z,s')} \\
    &= \frac{\kappa \alpha_1 \beta_1(s) T_1(z,s)}{\kappa \alpha_1 \sum_{s'\in V}\beta_1(s') T_1(z,s')} \\ 
    &= \frac{ \alpha_1 \beta_1(s) T'_1(z,s)}{\alpha_1 \sum_{s'\in V}\beta_1(s') T'_1(z,s')} \\ 
    &= \Prob{s\cond z',a=1 }.
\end{flalign}
The statement $\Prob{s\cond z,a } = \Prob{s\cond z'',a }$ is shown similarly. Next, for $a=0$, we have $\Prob{x\cond z',a} = 1$  and $\Prob{x\cond z'',a} = 0$ by the definition of the coupling $T'$. 
Moreover, for $s\in U^r$, $\Prob{s\cond z',a} = 0$ also follows by the definition of $T'$.  
Finally, write 
\begin{flalign}
    \Prob{s\cond z'',a=0} &=  \frac{\alpha_0 \beta_0(s) T'_0(z'',s)}{\sum_{s\in U^r} \alpha_0 \beta_0(s') T'_0(z'',s')} \\
    &=  \frac{\alpha_0 \beta_0(s) T_0(z,s)}{\sum_{s\in U^r} \alpha_0 \beta_0(s') T_0(z,s')} \\
    &=  \frac{\alpha_0 \beta_0(s) T_0(z,s)}{(1-\kappa)\sum_{s\in U} \alpha_0 \beta_0(s') T_0(z,s')} \\    
    &= (1-\kappa)^{-1} \Prob{s\cond z,a=0}.
\end{flalign}

\textbf{Claim \eqref{eq:inv_lem_claims4}:} 
We first observe that for any representation (and any z),
\begin{flalign}
\label{eq:in_lem_cl4_l1}
    \Prob{Y =y \cond Z=z} &= \frac{\sum_{s,a} \Prob{Y=y, s,a,z}}{\Prob{z}} \\
\label{eq:in_lem_cl4_l2}
    &= \frac{\sum_{s,a} \Prob{Y=y\cond s,a} \Prob{s,a,z}}{\Prob{z}} \\
\label{eq:in_lem_cl4_l3}
    &= \sum_a \Prob{a \cond z} \SqBrack{\sum_{s\in S_a} \Prob{Y=y\cond s,a} \Prob{s \cond a,z}},
\end{flalign}
where we have used the property \eqref{eq:representation_y_indep_condition} for the transition \eqref{eq:in_lem_cl4_l1}-\eqref{eq:in_lem_cl4_l2}.
Now, using \eqref{eq:inv_lem_claims3}, for $a=1$ we have 
\begin{flalign}
    &\sum_{s\in S_1} \Prob{Y=y\cond s,a=1} \Prob{s \cond a=1,z} = 
    \sum_{s\in S_1} \Prob{Y=y\cond s,a=1} \Prob{s \cond a=1,z'} \notag
    \\
    &=
\label{eq:in_lem_cl4_ll3}
    \sum_{s\in S_1} \Prob{Y=y\cond s,a=1} \Prob{s \cond a=1,z''}.
\end{flalign}
For $a=0$, we have for $z'$ using \eqref{eq:inv_lem_claims3}:
\begin{flalign}
\label{eq:in_lem_cl4_lll3}
    \sum_{s\in S_0} \Prob{Y=y\cond s,a=1} \Prob{s \cond a=1,z'} = 
    \Prob{Y=y\cond x,a=1}.
\end{flalign}
For $a=0$ and $z''$ we have
\begin{flalign}
    \sum_{s\in S_0} \Prob{Y=y\cond s,a=1} \Prob{s \cond a=1,z''} &= 
    \sum_{s\in U^r} \Prob{Y=y\cond s,a=1} \Prob{s \cond a=1,z''} \\ 
\label{eq:in_lem_cl4_llll3}
    &= 
    (1-\kappa)^{-1} \sum_{s\in U^r} \Prob{Y=y\cond s,a=1} \Prob{s \cond a=1,z} 
\end{flalign}
where we have used \eqref{eq:inv_lem_claims3} again on the last line. 

Combining \eqref{eq:in_lem_cl4_ll3},\eqref{eq:in_lem_cl4_lll3},\eqref{eq:in_lem_cl4_llll3}, and using \eqref{eq:inv_lem_claims2} and the general expression \eqref{eq:in_lem_cl4_l3}, we obtain the claim \eqref{eq:inv_lem_claims4}.

\textbf{Claim \eqref{eq:inv_lem_claims5}:}  This follows immediately from 
\eqref{eq:inv_lem_claims4} by using \eqref{eq:inv_lem_claims1} and the concavity of $h$.

\textbf{Claim \eqref{eq:inv_lem_claims6}:} 
By definition, for every representation, $\mu_a(z) = \Prob{Z=z\cond a} = \frac{\Prob{z\cond a}\Prob{z}}{\Prob{a}}$. 
Thus, using \eqref{eq:inv_lem_claims1},\eqref{eq:inv_lem_claims2} we have
for $a\in \Set{0,1}$,
\begin{equation}
    \mu_a(z') = \kappa \mu_a(z) \text{ and } \mu_a(z'') = (1-\kappa) \mu_a(z),
\end{equation}
which in turn yields \eqref{eq:inv_lem_claims6}.

It remains only to recall that we have derived \eqref{eq:inv_lem_claims5},\eqref{eq:inv_lem_claims6} under the assumption that $\Abs{V}>0$. That is, we assumed that the point $z$ which fails invertability on $S_0$ has some parents in $S_1$. The case when $\Abs{V}=0$, i.e. there are no parents in $S_1$ can be treated using a similar argument, but is much simpler. Indeed, in this case one can simply split $z$ into $z'$ and $z''$ and splitting the $S_0$ weight between them as before, without the need to carefully balance the interaction of probabilities with $S_1$ via $\kappa$. 

\end{proof}

\section{Uniform Approximation and Two Point Representations}
\label{sec:uniform_approximation}

As discussed in Sections \ref{sec:introduction},\ref{sec:mifpo_construction}, we are interested in showing that all  optimal invertible representations, no matter which, and no matter on which set $\mcZ'$, can be approximated using a representation with the following property: For every  $u\in S_0,v\in S_1$, there are at most $k$ points 
$z\in \mcZ$ that have $(u,v)$ as parents, see Figure \ref{fig:technical_details}(a). Here $k$ would depend only on the desired approximation degree, but not on $\mcZ'$, or on the exact representation we are approximating. We therefore refer to this result as the Uniform Approximation result. Its implications for practical use were discussed in Section \ref{sec:mifpo_construction}.

The notation used in this Section was introduced in the beginning of Section \ref{sec:proof_of_invertibility}.

To proceed with the analysis, in what follows we introduce the notion of two-point representation. The main result is given as Lemma \ref{lem:uniform_approx} below.

This Section uses the notation of Section \ref{sec:opt_reprs_properties}.
Let $T$ be an invertible representation,  let $u\in S_0, v\in S_1$ be some points, and denote by $\mcZ_{uv} = \Set{z^j}_{1}^k$ the set of all points $z\in \mcZ$ which have $u$ and $v$ as parents. Denote by 
\begin{equation}
    w_u = \sum_{j=1}^k \beta_0(u) T_0(z^j,u) \text{ and }
    w_v = \sum_{j=1}^k \beta_1(v) T_1(z^j,v)
\end{equation}
the total weights of $\beta_0$ and $\beta_1$ transferred by the representation from $u$ and $v$ respectively to $\mcZ_{uv}$. Recall that $\rho_u,\rho_v$ denote the distributions of $Y$ conditioned on $u,v$.
We call the situation above, i.e. the collection of numbers 
$\Brack{ \Set{\beta_0(u)T_0(z^j,u)}_{j\leq k}, \Set{\beta_1(v) T_1(Z^j,v)}_{j\leq k}}$,
a \emph{two point representation}, since it describes how the weight from the points $u,v$ is distributed in the representation,  independently of the rest of the representation. 
The contribution of $\mcZ_{uv}$ to the global performance cost is 
\begin{flalign}
    E_{uv,T} &:= \sum_{j\leq k}  \Prob{z^j} h(\Prob{Y\cond z^j}) \\ 
    &= 
    \sum_{j\leq k} \Brack{\alpha_0 \beta_0(u) T_0(z^j,u) + 
    \alpha_1 \beta_1(v) T_1(z^j,v)} h\Brack{
    \frac{\alpha_0 \beta_0(u) T_0(z^j,u) \rho_u + 
    \alpha_1 \beta_1(v) T_1(z^j,v)\rho_v}{
    \alpha_0 \beta_0(u) T_0(z^j,u) + 
    \alpha_1 \beta_1(v) T_1(z^j,v)    
    }
    },      
\end{flalign}
while its contribution to the fairness condition is 
\begin{equation}
\label{eq:two_point_fairness_condition}
    F_{uv,T} = \half \sum_{j\leq k} \Abs{\beta_0(u) T_1(z^j,u) - \beta_1(v) T_1(z^j,v)}.
\end{equation}
Let us now consider two extreme cases of two-point representations.
Assume that the total amounts of weight to be represented, $w_u,w_v$ are fixed. The first case is when $k=1$, and this is the maximum fairness case, since in this case the weights $w_u,w_v$ overlap as much as possible. Indeed, the contributions to the fairness penalty  and performance cost in this case are 
\begin{equation}
\label{eq:two_point_pareto_extr1}
    \Abs{w_u - w_v} \text{ and } \Brack{\alpha_0 w_u + \alpha_1 w_v} h\Brack{\frac{\alpha_0 w_u \rho_u + \alpha_1 w_v \rho_v}{
    \alpha_0 w_u + \alpha_1 w_v 
    }}
\end{equation}
respectively. The other extreme case is when $w_u$ and $w_v$ do not overlap at all. This case be realised with $k=2$, by sending all $w_u$ to $z^1$ and 
all $w_v$ to $z^2$. The fairness and performance contributions would be 
\begin{equation}
\label{eq:two_point_pareto_extr2}
    w_u + w_v \text{ and } \alpha_0 w_u \cdot  h\Brack{\rho_u}
    +\alpha_1 w_v \cdot  h\Brack{\rho_v},
\end{equation}
respectively. Note that the fairness penalty is the maximum possible, while the performance cost is the minimum possible (indeed, this is the cost before the representation, and any representation can only increase it, by Lemma \ref{lem:data_processing}). We thus observed that each two points 
$u,v$, with fixed total weight $w_u,w_v$, can have their own Pareto front of performance-fairness. One could, in principle, fix a threshold 
$\gamma_{uv}$, $\Abs{w_u-w_v}\leq \gamma_{uv} \leq w_u + w_v$ for the fairness penalty \eqref{eq:two_point_fairness_condition}, and obtain a performance cost between that in \eqref{eq:two_point_pareto_extr1} and \eqref{eq:two_point_pareto_extr2}. However, it is not clear how large the number of points $k$ should be in order to realise such intermediate representations. 
In the following Lemma we show that one can uniformly approximate all the points on the two-point Pareto front using a fixed number of points, that depends only on the function $h$. This means that in practice one can choose a certain number $n$ of $z$ points, and have guaranteed bounds on the possible amount of loss incurred with respect to all representations of all other sizes. 
\begin{lemma}
\label{lem:uniform_approx}
For every $\eps>0$, there a number $n=n_{\eps}$ depending only on the function $h$, with the following property:
For every two-point representation 
$\Brack{ \Set{\beta_0(u)T_0(z^j,u)}_{j\leq k}, \Set{\beta_1(v)T_1(Z^j,v)}_{j\leq k}}$,
with total weights $w_u,w_v$, there is a two point representation $T'$ on a set $\mcZ'_{u,v}$, with the same total weights, 
such that $\Abs{\mcZ'_{uv}} \leq n$, 
and such that 
\begin{equation}
\label{eq:lem_uniform_approx_eq1}
    F_{uv,T'} \leq F_{uv,T} \text{ and }\spaceo E_{uv,T'} \leq E_{uv,T} + 2(w_u+w_v)\eps.
\end{equation}
\end{lemma}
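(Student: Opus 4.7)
My plan is to parametrize each atom of the two-point representation by a one-dimensional ``mixing'' coordinate and interpolate onto a fixed finite grid. For $j \leq k$, write $a_j := \beta_0(u) T_0(z^j, u)$, $b_j := \beta_1(v) T_1(z^j, v)$, $m_j := \alpha_0 a_j + \alpha_1 b_j$, and $\theta_j := \alpha_0 a_j / m_j \in [0,1]$. Then $\Prob{Y \cond z^j} = \theta_j \rho_u + (1-\theta_j) \rho_v$ and, using $\alpha_0 + \alpha_1 = 1$, $|a_j - b_j| = m_j |\theta_j - \alpha_0|/(\alpha_0 \alpha_1)$, so
\begin{equation*}
E_{uv,T} = \sum_j m_j \tilde{h}(\theta_j), \qquad F_{uv,T} = \frac{1}{2\alpha_0\alpha_1} \sum_j m_j |\theta_j - \alpha_0|,
\end{equation*}
where $\tilde{h}(\theta) := h(\theta \rho_u + (1-\theta) \rho_v)$ is concave on $[0,1]$. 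Since $h$ is continuous on the compact simplex $\Delta_{\mcY}$, $\tilde{h}$ inherits a modulus of continuity that is controlled by that of $h$ alone (the Lipschitz constant of $\theta \mapsto \theta \rho_u + (1-\theta) \rho_v$ being at most $1$ in total variation).

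Next I would pick a grid $0 = \theta^{(1)} < \cdots < \theta^{(n)} = 1$ satisfying (i) $\alpha_0 = \theta^{(l_0)}$ for some $l_0$, and (ii) spacing fine enough that $\tilde{h}$ oscillates by at most $\eps$ on each subinterval; by the modulus remark this determines $n = n_\eps$ depending only on $\eps$ and $h$. For each $j$, locate the subinterval $[\theta^{(l_j)}, \theta^{(l_j+1)}] \ni \theta_j$, write $\theta_j = (1-\lambda_j)\theta^{(l_j)} + \lambda_j \theta^{(l_j+1)}$, and split $z^j$ into two virtual sub-atoms of mass $(1-\lambda_j) m_j$ at $\theta^{(l_j)}$ and $\lambda_j m_j$ at $\theta^{(l_j+1)}$. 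Merging all sub-atoms landing on a common grid point defines $T'$ on a set of size at most $n$. The $a$-contribution of $z^j$ after the split equals $(1-\lambda_j) m_j \theta^{(l_j)}/\alpha_0 + \lambda_j m_j \theta^{(l_j+1)}/\alpha_0 = m_j \theta_j/\alpha_0 = a_j$, and likewise for $b_j$, so the per-source totals $w_u$ and $w_v$ are exactly preserved.

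For the fairness bound, because $\alpha_0$ is a grid point, no subinterval straddles it, and $\theta \mapsto |\theta - \alpha_0|$ is affine on each one, giving $(1-\lambda_j)|\theta^{(l_j)} - \alpha_0| + \lambda_j |\theta^{(l_j+1)} - \alpha_0| = |\theta_j - \alpha_0|$. Moreover, all sub-atoms pooled at a fixed grid point $\theta^{(l)}$ contribute to $a - b$ with a common sign equal to $\text{sign}(\theta^{(l)} - \alpha_0)$, so merging commutes with the absolute value; summing yields $F_{uv,T'} = F_{uv,T}$. For the performance bound, concavity of $\tilde{h}$ gives $(1-\lambda_j)\tilde{h}(\theta^{(l_j)}) + \lambda_j \tilde{h}(\theta^{(l_j+1)}) \leq \tilde{h}(\theta_j)$, hence $E_{uv,T'} \leq E_{uv,T} \leq E_{uv,T} + 2(w_u+w_v)\eps$. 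The main subtlety I anticipate is the $F$-step: on any interval containing $\alpha_0$ in its interior, $|\cdot - \alpha_0|$ is strictly convex, so interpolation would strictly \emph{increase} the $F$ contribution rather than preserve it --- this forces $\alpha_0$ itself to be a grid node. Once that placement is built in, the rest is driven purely by the affine reconstruction of $(a_j, b_j)$ and the concavity of $\tilde{h}$; in fact the $\eps$ slack in the performance bound is never used, so a more careful statement would hold with $n = 3$ grid points $\{0, \alpha_0, 1\}$ and $\eps = 0$.
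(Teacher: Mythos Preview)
Your argument is correct and takes a genuinely different route from the paper's. The paper parametrises each atom by the same ratio $p^j = c_0^j/(c_0^j+c_1^j)$ (your $\theta_j$), but then simply \emph{aggregates} all atoms whose $p^j$ fall in the same cell of a $\delta$-net: it sets $c'^i_a = \sum_{j\in\Gamma_i} c^j_a$ with no splitting. Because the pooled ratio $c'^i_0/(c'^i_0+c'^i_1)$ is only $\delta$-close to the net point $x_i$, the paper must invoke uniform continuity of $h$ to control $\Abs{E_{uv,T'}-E_{uv,T}}$, which is where the $2(w_u+w_v)\eps$ slack comes from; the fairness inequality $F_{uv,T'}\leq F_{uv,T}$ is then just the triangle inequality $\bigl|\sum_{j\in\Gamma_i}\Lambda c^j\bigr|\leq\sum_{j\in\Gamma_i}\Abs{\Lambda c^j}$.

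Your barycentric splitting exploits the concavity of $\tilde h$ directly, so no continuity estimate is needed and you actually get $E_{uv,T'}\leq E_{uv,T}$; forcing $\alpha_0$ onto the grid makes $\theta\mapsto|\theta-\alpha_0|$ affine on every cell, upgrading the fairness inequality to the equality $F_{uv,T'}=F_{uv,T}$. Your concluding observation that the three-point grid $\{0,\alpha_0,1\}$ already suffices (hence $n=3$, $\eps=0$) is correct and strictly sharpens the lemma: every two-point representation is dominated in both $E$ and $F$ by one supported on at most three atoms. The paper's aggregation argument is a bit shorter and does not need to single out $\alpha_0$, but at the cost of the $\eps$ loss and an $n$ that grows with the modulus of continuity of $h$.
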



\begin{proof}
To aid with brevity of notation, define
for $j\leq k$
\begin{equation}
    c_0^j = \alpha_0 \beta_0(u) T_0(z^j,u), \spaceo
    c_1^j = \alpha_1 \beta_1(v) T_1(z^j,v).
\end{equation}
Then we can write 
\begin{equation}
    E_{uv,T} = \sum_{j\leq k} (c^j_0+c^j_1) \cdot h\Brack{\frac{c_0^j\rho_u + c_1^j \rho_v}{c_0^j + c_1^j} } , \spaceo 
    F_{uv,T} = \half \sum_{j\leq k} \Abs{\Lambda c^j},
\end{equation}
where $\Lambda$ is the vector $\Lambda = (\alpha_0^{-1},-\alpha_1^{-1})$, $c^j = (c_0^j,c_1^j)$, and $\Lambda c^j$ is the inner product of the two.  

Observe that the cost $E_{uv,T}$ depends on $c^j$ mainly through the fractions $\frac{c_0^j}{c_0^j + c_1^j}$. Our strategy thus would be to approximate all $k$ of such fractions by a $\delta$-net of a size independent of $k$. To this end, set 
\begin{equation}
    p^j = \frac{c_0^j}{c_0^j + c_1^j}
\end{equation}
and define $h_{uv}:[0,1] \rightarrow \RR$ by 
\begin{equation}
    h_{uv}(p) = h(p \rho_u + (1-p) \rho_v). 
\end{equation}
Since $h$ is continuous (by assumption), and defined on a compact set, it is \emph{uniformly} continuous, and so is $h_{uv}$. By definition, this means there is a $\delta>0$ such that for all $p,p'$ with $\Abs{p-p'}\leq \delta$, it holds that $\Abs{h_{uv}(p)-h_{uv}(p')}\leq \eps$. Let us now choose $\Set{x_i}_{i=1}^n$ to be a $\delta$ net on $[0,1]$. 
For every $i\leq n$ set 
\begin{equation}
    \Gamma_i = \Set{j \setsep \Abs{p^j-x_i} \leq \delta, \text{ and $i$ is minimal with this property}}.
\end{equation}
That is, $\Gamma_i$ is the set of indices $j$ such that $p^j$ is approximated by $x^i$. 
Using $x_i$ we construct the representation $T'$ as follows:
For $a\in \Set{0,1}$ set
\begin{equation}
    c'^i_a = \sum_{j\in \Gamma_i} c_a^j.
\end{equation}
For $n$ new points, $z^i \in \mcZ'_{uv}$, 
set $T'_0(z'^i,u) = c'^i_0/ \beta_0(u)$, 
$T'_1(z'^i,v) = c'^i_1/ \beta_1(v)$. Note that the total weights 
are preserved, $\sum_{i\leq n} c'^i_0 = w_u$ and $\sum_{i\leq n} c'^i_1 = w_v$. 

Next, for every $j\in \Gamma_i$ we have 
\begin{equation}
\Abs{\frac{c^j_0}{c^j_0 + c^j_1}  - x_i} \leq \delta. 
\end{equation}
Thus 
\begin{flalign}
\Abs{\frac{c'^i_0}{c'^i_0 + c'^i_1} - x_i} = 
\Abs{\frac{\sum_{j\in \Gamma_j} \SqBrack{c^j_0 -(c^j_0+c^j_1)x_i}}{c'^i_0 + c'^i_1}} \leq 
\frac{\sum_{j\in \Gamma_j} \delta (c^j_0 + c^j_1)}{c'^i_0 + c'^i_1} = \delta.
\label{eq:uniform_approx_lem_proof1}
\end{flalign}

Next, observe that by the construction of $x_i$,
\begin{flalign}
\Abs{E_{uv,T} - \sum_i (c'^i_0+c'^i_1) h_{uv}(x_i) } &= 
\Abs{\sum_i \sum_{j\in \Gamma_i} (c^j_0+c^j_1) h_{uv}(p^j) - \sum_i (c'^i_0+c'^i_1) h_{uv}(x_i)} \\
&\leq  
\sum_i \sum_{j\in \Gamma_i} (c^j_0+c^j_1) \eps \\
&= (w_u+w_v) \eps.
\end{flalign}
In addition, 
\begin{flalign}
\Abs{E_{uv,T'} - \sum_i (c'^i_0+c'^i_1) h_{uv}(x_i) } &= 
\Abs{\sum_i (c'^i_0+c'^i_1) h_{uv}(\frac{c'^i_0}{c'^i_0+c'^i_1}) - \sum_i (c'^i_0+c'^i_1) h_{uv}(x_i)} \\
&\leq   (w_u+w_v) \eps, 
\end{flalign}
where we have used \eqref{eq:uniform_approx_lem_proof1} in the last transition. 

Combining the two inequalities yields the second part of \eqref{eq:lem_uniform_approx_eq1},
\begin{equation}
    \Abs{E_{uv,T'} - E_{uv,T}} \leq  2(w_u+w_v)\eps.
\end{equation}

Finally, note that 
\begin{flalign}
\sum_{i}\Abs{\Lambda c'^i} &= 
\sum_{i}\Abs{ \sum_{j \in \Gamma_j} \Lambda c^j} \\ 
&\leq  
\sum_{i} \sum_{j \in \Gamma_j} \Abs{\Lambda c^j} \\ 
&=  
\sum_{j}  \Abs{\Lambda c^j} ,
\end{flalign}
yielding the first part of, and thus completing the proof of, statement \eqref{eq:lem_uniform_approx_eq1}.

It remains to observe that above we have used a $\delta$ net for $h_{uv}$, which depends on $\rho_u,\rho_v$. However, we can directly build an appropriate $\delta$-net in full range of $h$, the simplex $\Delta_{\mcY}$, 
which would produce bounds valid for all $u,v$. 
Indeed, let $\delta'$ be such that $\Abs{h(\nu)-h(\nu)}\leq \eps$ for all $\mu,\nu \in \Delta_{\mcY}$ with $\norm{u-v}_1 \leq \delta'$. Observe that the map $p \mapsto p \rho_v + (1-p) \rho_u$ is 2-Lipschitz from $\RR$ to $\Delta_{\mcY}$ equipped with the $\norm{\cdot}_1$ norm, for any $u,v\in \Delta_{\mcY}$. Thus, choosing $\delta = \half \delta'$, we have 
$\Abs{h_{uv}(p)-h_{uv}(p')}\leq \eps$ if $\Abs{p-p'}\leq \delta$.
This completes the proof of the Lemma. 
\end{proof}

\section{Concavity Of $E_r$}
\label{sec:concavity_of_Er}

Note that the variables $r$ appear in \eqref{eq:Er_perf_cost} both as coefficients multiplying $h$ and inside the arguments of $h$, in a fairly involved manner. Nevertheless, the cost turns out to still retain an interesting structure, as it is \emph{concave}, if $h$ is. We record this in the following Lemma. 

\begin{lemma}
\label{lem:concavity_of_mult_h}
If $h:\Delta_{\mcY} \rightarrow \RR$ is concave, then of every $\rho_1,\rho_2 \in \Delta_{\mcY}$ the function $g:\RR^2 \rightarrow \RR$, given by $g((c_1,c_2)) = (c_1+c_2) h(\frac{c_1 \rho_1 + c_2 \rho_2}{c_1+c_2})$ is concave.      
\end{lemma}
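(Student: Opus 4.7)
The statement is the standard fact that the perspective of a concave function is concave, composed with an affine map. Concretely, the plan is to recognize that $g$ factors as $g(c_1,c_2) = \tilde h(c_1+c_2,\; c_1\rho_1+c_2\rho_2)$, where $\tilde h(s,v) := s\,h(v/s)$ is the perspective of $h$, and the inner map $(c_1,c_2)\mapsto (c_1+c_2,\; c_1\rho_1+c_2\rho_2)$ is affine. Since affine precomposition preserves concavity, the entire argument reduces to showing concavity of the perspective $\tilde h$ on the domain $s>0$. I would either cite this as a standard convex-analysis fact (e.g.\ Boyd--Vandenberghe) or, to keep the proof self-contained, unroll the definition directly, which is the route I describe next.

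For a direct argument, fix $(c_1,c_2),(c_1',c_2')$ with positive sums $s=c_1+c_2$, $s'=c_1'+c_2'$, and a weight $\lambda\in[0,1]$. Write $v = c_1\rho_1+c_2\rho_2$ and $v' = c_1'\rho_1+c_2'\rho_2$, and set $\tilde s = \lambda s + (1-\lambda)s'$, $\tilde v = \lambda v + (1-\lambda)v'$. The key algebraic identity is
\begin{equation}
\frac{\tilde v}{\tilde s} \;=\; \mu \cdot \frac{v}{s} \;+\; (1-\mu)\cdot \frac{v'}{s'},
\qquad \text{where } \mu = \frac{\lambda s}{\lambda s + (1-\lambda)s'} \in [0,1].
\end{equation}
This rewrites the argument of $h$ on the left-hand side as a genuine convex combination of the two arguments appearing on the right-hand side.

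Concavity of $h$ then yields $h(\tilde v/\tilde s) \geq \mu\, h(v/s) + (1-\mu)\, h(v'/s')$. Multiplying by $\tilde s$ and noting $\tilde s\,\mu = \lambda s$ and $\tilde s\,(1-\mu) = (1-\lambda)s'$ gives
\begin{equation}
\tilde s \cdot h(\tilde v/\tilde s) \;\geq\; \lambda \, s\, h(v/s) \;+\; (1-\lambda)\, s'\, h(v'/s'),
\end{equation}
which is precisely $g\bigl(\lambda(c_1,c_2)+(1-\lambda)(c_1',c_2')\bigr) \geq \lambda g(c_1,c_2) + (1-\lambda) g(c_1',c_2')$.

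The only subtle point, and really the main obstacle in writing the proof cleanly, is the domain: $g$ is only defined on the open half-space $c_1+c_2>0$, so one must either restrict attention there or extend $g$ by continuity to the boundary (setting $g(0,0)=0$) and check that concavity extends as well. Beyond that caveat the computation is mechanical, and the perspective-function framing is the right way to organize it.
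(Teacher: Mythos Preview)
Your proof is correct and follows essentially the same computation as the paper's: both rewrite the argument of $h$ at the convex combination as a weighted average of the two endpoint arguments (your $\mu$ is exactly the paper's $\frac{c_1+c_2}{c_1+c_2+c_1'+c_2'}$ in the midpoint case), then invoke concavity of $h$ and clear the denominator. Your version is in fact slightly cleaner, since you treat a general $\lambda\in[0,1]$ rather than only the midpoint, you frame the argument via the perspective function (which is the right conceptual home for it), and you flag the domain issue $c_1+c_2>0$ that the paper's statement $g:\RR^2\to\RR$ glosses over.
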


\begin{proof}
It is sufficient to show that for every $c,c'\in \RR^2$, we have $g((c+c')/2) \geq \half (g(c) + g(c')$. To this end, 
define the map $F:\RR^2 \rightarrow \Delta_{\mcY}$ by 
\begin{equation}
F(c) = \frac{c_1 \rho_1 + c_2 \rho_2}{c_1+c_2}
\end{equation}
and note that 
\begin{flalign}
F((c+c')/2) = F(c)\frac{c_1+c_2}{c_1+c_2+c'_1+c'_2} +  
F(c')\frac{c'_1+c'_2}{c_1+c_2+c'_1+c'_2}.
\end{flalign}
It then follows that 
\begin{flalign}
    g((c+c')/2) &= \half (c_1+c_2+c'_1+c'_2)h(F((c+c')/2)) \\
    &= \half (c_1+c_2+c'_1+c'_2) h\Brack{F(c)\frac{c_1+c_2}{c_1+c_2+c'_1+c'_2} +  
F(c')\frac{c'_1+c'_2}{c_1+c_2+c'_1+c'_2}} \\ 
    &\geq \half (c_1+c_2+c'_1+c'_2) \SqBrack{ \frac{c_1+c_2}{c_1+c_2+c'_1+c'_2} h(F(c)) + 
    \frac{c'_1+c'_2}{c_1+c_2+c'_1+c'_2}h(F(c'))  } \\ 
    &= \half \Brack{g(c) + g(c')}.
\end{flalign}
\end{proof}

\section{MIFPO Equality Constraint}
\label{sec:linearizing_fairness_constraint}

As noted in the main text, although the inequality constraint \eqref{eq:fairness_constraint_construction_notation} is convex in the variables $r$, and can be incorporated directly into most optimisation frameworks, it may be significantly more convenient to work with \emph{equality} constraints.  Using the following Lemma, we can find equivalent equality constraints in a particularly simple form. 
\begin{lemma}
\label{lem:tv_gamma_relation}
Let $\mu_0,\mu_1 \in \Delta_{\mcZ}$ be two probability distributions over $\mcZ$ and fix some $\gamma\geq 0$. If  $\norm{\mu_0 -\mu_1}_{TV} = \gamma$ then there exist $\phi_0,\phi_1\in \Delta_{\mcZ}$ such that $\mu_0+\gamma \phi_0 = \mu_1 + \gamma \phi_1$. In the other direction, if there exist 
$\phi_0,\phi_1\in \Delta_{\mcZ}$ such that $\mu_0+\gamma \phi_0 = \mu_1 + \gamma \phi_1$, then $\norm{\mu_0 -\mu_1}_{TV} \leq \gamma$.
\end{lemma}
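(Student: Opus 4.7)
The plan is to use the Jordan/Hahn-type decomposition of the signed measure $\mu_0 - \mu_1$, which on a finite (or discrete) $\mcZ$ is just the split into positive and negative parts. This will produce canonical candidates for $\phi_0, \phi_1$ in the forward direction, and the converse will fall out immediately from the triangle-like computation.

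For the forward direction, assume $\norm{\mu_0 - \mu_1}_{TV} = \gamma$. Define $d(z) := \mu_0(z) - \mu_1(z)$ and split $d = d^+ - d^-$ with $d^{\pm}(z) \geq 0$ having disjoint supports. Since both $\mu_0$ and $\mu_1$ are probability distributions, $\sum_z d(z) = 0$, so $\sum_z d^+(z) = \sum_z d^-(z)$. By the standard identity $\norm{\mu_0 - \mu_1}_{TV} = \tfrac{1}{2}\sum_z |d(z)| = \sum_z d^+(z)$, both sums equal $\gamma$. If $\gamma > 0$, set $\phi_1 = d^+/\gamma$ and $\phi_0 = d^-/\gamma$; these lie in $\Delta_{\mcZ}$ by construction. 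Then $\gamma \phi_1 - \gamma \phi_0 = d^+ - d^- = d = \mu_0 - \mu_1$, which rearranges to $\mu_0 + \gamma \phi_0 = \mu_1 + \gamma \phi_1$. The edge case $\gamma = 0$ forces $\mu_0 = \mu_1$, and we can take $\phi_0 = \phi_1$ to be any element of $\Delta_{\mcZ}$.

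For the converse, suppose $\mu_0 + \gamma \phi_0 = \mu_1 + \gamma \phi_1$ for some $\phi_0, \phi_1 \in \Delta_{\mcZ}$. Rearranging gives $\mu_0 - \mu_1 = \gamma(\phi_1 - \phi_0)$, hence
\begin{equation}
\norm{\mu_0 - \mu_1}_{TV} = \tfrac{1}{2}\sum_z \Abs{\mu_0(z) - \mu_1(z)} = \gamma \cdot \tfrac{1}{2}\sum_z \Abs{\phi_1(z) - \phi_0(z)} = \gamma \norm{\phi_0 - \phi_1}_{TV} \leq \gamma,
\end{equation}
where the last inequality uses that the TV distance between two probability distributions is always at most $1$.

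There is no real obstacle here; the main thing to be careful about is the degenerate case $\gamma = 0$ in the forward direction (handled by picking any common $\phi$) and making sure the normalization of $d^{\pm}/\gamma$ produces genuine probability vectors, which is guaranteed by $\sum_z d^+(z) = \sum_z d^-(z) = \gamma$.
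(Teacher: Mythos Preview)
Your proof is correct and follows essentially the same approach as the paper: both split $\mu_0-\mu_1$ into its positive and negative parts, normalize by $\gamma$ to obtain $\phi_0,\phi_1\in\Delta_{\mcZ}$, and use $\norm{\phi_0-\phi_1}_{TV}\le 1$ for the converse. You additionally handle the degenerate case $\gamma=0$ explicitly, which the paper leaves implicit.
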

The proof may be found in Section \ref{sec:proofs}.

As consequence of this result, if we find distributions 
$\phi_0,\phi_1\in \Delta_{\mcZ}$ such that $\mu_0+\gamma \phi_0 = \mu_1 + \gamma \phi_1$ holds, then we know that \eqref{eq:fairness_constraint_construction_notation} also holds, and conversely, if \eqref{eq:fairness_constraint_construction_notation} holds, then distributions as above exist. 

Using this observation, we introduce new variables, $\phi_{u,v,j}^0$  and $\phi_{u,v,j}^1$, for every $(u,v,j) \in \mcZ$, which correspond to  
$\phi_0((u,v,j))$ and $\phi_1((u,v,j))$ respectively. 
These variables will be required to satisfy the following constraints:
\begin{align}
\label{eq:phi_constr_1}
  &\phi_{u,v,j}^0 \geq 0, \spaceo \phi_{u,v,j}^1\geq 0  \spaceo \forall (u,v,j)\in \mcZ \\    
  & \sum_{u,v,j} \phi_{u,v,j}^0 = 1 \text{ and } \sum_{u,v,j} \phi_{u,v,j}^1 = 1 
\end{align}
\hspace{-2 pt}
\begin{equation}
\begin{aligned}
     \beta_0(u) r_{u,v,j}^0 &+ \gamma \phi_{u,v,j}^0 = 
      \beta_1(v) r_{v,u,j}^0 + \gamma \phi_{u,v,j}^1  \spaceo \forall (u,v,j) \in \mcZ.
    \label{eq:phi_constr_2}
\end{aligned}    
\end{equation}

Here the first two lines encode the fact that $\phi_0,\phi_1$ are probabilities, while the third line encodes the fairness constraint, as discussed above.

\section{Additional Proofs}
\label{sec:proofs}

Proof Of Lemma \ref{lem:tv_gamma_relation}
\begin{proof}
For this proof it is more convenient to work with the $\ell_1$ norm $\norm{\cdot}_1$ directly. Recall that $\norm{\mu_0 -\mu_1}_{TV} = \half \norm{\mu_0 -\mu_1}_{1}$ and that 
\begin{equation}
    \norm{\mu_0 -\mu_1}_{1} = \sum_{z} \Abs{\mu_0(z) -\mu_1(z)}.
\end{equation}

Assume that $\norm{\mu_0 -\mu_1}_{1} = 2\gamma$. Define the functions
$\bar{\phi}_0(z) = \Ind{\mu_1\geq \mu_0}(z) \cdot (\mu_1(z) - \mu_0(z))$ and 
$\bar{\phi_1}(z) = \Ind{\mu_0 \geq \mu_1}(z) \cdot (\mu_0(z) - \mu_1(z))$. Note that we then have
\begin{equation}
\label{eq:tv_gamma_rel1}
    \sum_z \bar{\phi}_0(z) = \sum_z \bar{\phi}_1(z) = \gamma.
\end{equation}
Indeed, define 
\begin{equation}
    \eta(z) = \Ind{\mu_0 \geq \mu_1}(z) \cdot \mu_1(z) + 
              \Ind{\mu_1 \geq \mu_0}(z) \cdot \mu_0(z). 
\end{equation}
Clearly, $\eta + \bar{\phi}_0 = \mu_1$ and thus 
$\sum_z \eta(z) + \bar{\phi}_0(z) = 1$.  
Similarly, 
$
\sum_z \eta(z) + \bar{\phi}_1(z) = 1
$.
Therefore we have
\begin{equation}
\label{eq:tv_gamma_rel2}
    \sum_z \bar{\phi}_0(z) = \sum_z \bar{\phi}_1(z).
\end{equation}
Note also that we can write 
\begin{equation}
2\gamma = \norm{\mu_0 -\mu_1}_{1} = \sum_z \Abs{\mu_0(z) - \mu_1(z)} = 
\sum_z \Brack{\bar{\phi}_0(z) +  \bar{\phi}_1(z)},     
\end{equation}
which combined with \eqref{eq:tv_gamma_rel2} yields \eqref{eq:tv_gamma_rel1}.

Next, we can also directly verify that 
\begin{equation}
    \mu_0 + \bar{\phi_0}  = \mu_1 + \bar{\phi_1},
\end{equation}
and thus setting $\phi_0 = \gamma^{-1} \bar{\phi_0},\phi_1 = \gamma^{-1} \bar{\phi_1}$ completes the proof of this direction.

In the other direction, given $\phi_0,\phi_1\in \Delta_{\mcZ}$ such that $\mu_0+\gamma \phi_0 = \mu_1 + \gamma \phi_1$, we have 
\begin{equation}
\sum_{z} \Abs{\mu_0(z) -\mu_1(z)} = \gamma \sum_{z} \Abs{\phi_1(z) -\phi_0(z)} \leq 2\gamma,
\end{equation}
thus completing the proof.
\end{proof}

\section{Experiments}
\label{sec:experiments_additional_detail}

This section describes additional evaluation details and experiments with fair classifiers. 
In Section\ref{sec:implementation_and_compute_details} we 
provide the main algorithm figure, and discuss technical implementation details. Section \ref{sec:supp_experiments_additional} contains the comparison to a number of fair classifiers, and Section \ref{sec:supp_concave_in_dccp} discusses implementation of the entropy cost $h$ within the DCCP framework. 

\subsection{Implementations and computational details}
\label{sec:implementation_and_compute_details}
    \begin{algorithm}[H]
      \caption{MIFPO Implementation}
      \label{alg:param_and_mifpo}
      \begin{algorithmic}
        \STATE {\bfseries Input:} Data $\Set{(x_i,a_i,y_i)}_{i\leq N}$, integers $L,k$. \\
        \spaceo For $a\in \Set{0,1}$ denote $X_a = \Set{x_i \setsep a_i=a}$.
        \STATE \textbf{1.} Learn \emph{calibrated} classifiers\\ \spaceo $c_0,c_1:\RR^d \rightarrow [0,1]$,  such that \\
        \spaceo $ c_a(x) \sim \Prob{Y=1\cond X=x, A=a} $   
        \STATE \textbf{2.} Construct the histograms
        $\Set{\beta_a(l)}_{l=1}^L$, \\ 
        \spaceo $a\in \Set{0,1}$, for the sets \\
           \spaceo $H_a = \Set{c_a(x) \setsep (x,a) \in D_a}\subset [0,1]$. \\
           \spaceo Choose bin representatives $\Set{\rho_l}_{l=1}^L$
        \STATE \textbf{3.} Solve MIFPO, given by Definition \ref{def:mifpo_def}, \\
        \spaceo with parameters $k$ and \\
        \spaceo $\Set{\beta_a(l)}_{l=1}^L$, $\Set{\rho_l}_{l=1}^L$, 
        $\alpha_a = \Abs{\Set{i\setsep a_i = a}}/N$.        
      \end{algorithmic}
   \end{algorithm}

The Pareto front evaluation requires two main parts - building a calibrated classifier required for evaluating $c_a = P(Y|X, A = a)$, and later solving the optimization problem MIFPO (see Algorithm \ref{alg:param_and_mifpo}).

For a calibrated classifier, we are using standard model calibration. Model calibration is a well-studied problem where we fit a monotonic function to the probabilities of some base model so that the probabilities will reflect real probabilities, that is, $P(Y|X)$. Here, we used Isotonic regression \citep{berta2024classifier} for model calibration with XGBoost \citep{chen2015xgboost} as the base model. For training the XGBoost model, a GridSearchCV approach is employed to find the best hyperparameters from a specified parameter grid, using 3-fold cross-validation.

The experiments were conducted on a system with an Intel Core i9-12900KS CPU (16 cores, 24 threads), 64 GB of RAM, and an NVIDIA GeForce RTX 3090 GPU.

\subsection{Additional Evaluations}
\label{sec:supp_experiments_additional}
\begin{figure*}
    \centering
    \includegraphics[width=\linewidth,height = 3.8cm]{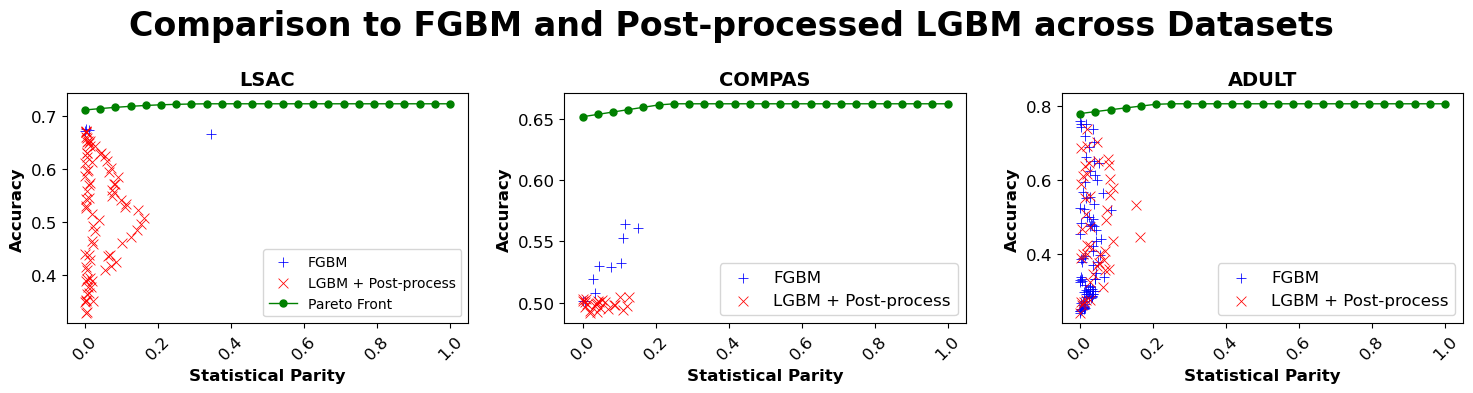}
    \caption{Comparing common fair classification pipelines to the MIFPO Pareto front, for LSAC, COMPAS, and ADULT datasets. 
    For FGBM and LGBM+Post Process methods, each point represents a trade-off obtained at a single hyper-parameter configuration.}
    \label{fig:pareto_classifiers}
\end{figure*}
Following the equivalence between the Pareto fronts of fair binary classification and representations for the accuracy cost (Section \ref{sec:problem_setting}), we evaluated the accuracy-fairness trade-off for some common fair classification algorithms. For our evaluation, we selected the most widely used algorithms based on GitHub repository stars and citation counts in the literature, demonstrating the importance of our proposed method in comparison to common approaches. We also evaluate FairFront, a more recent approach introduced in \cite{wang2023aleatoric}.

Fair classifiers generally fall into pre-processing, in-processing, and post-processing categories. Pre- and post-processing types often utilize standard classifiers as part of their fair classification pipeline. We specifically evaluated two widely adopted algorithms, representing different categories: FairGBM \cite{cruz2023fairgbmgradientboostingfairness}: An in-processing method where a boosting trees algorithm (LightGBM) is subjected to pre-defined fairness constraints. Balanced-Group-Threshold \cite{DBLP:journals/corr/abs-2111-04271}: A post-processing method which adjusts the threshold per group to obtain a fairness criterion.
For FairGBM, we used the original implementation provided by the authors. For Balanced-Group-Threshold post-processing, we utilized implementations available via Aequitas \cite{2018aequitas}, a popular bias and fairness audit toolkit. 

We conducted our evaluation using three of the most common datasets in this field, which are known to have relevance to real-world decision-making processes: the Adult dataset (income prediction), COMPAS (recidivism prediction), and LSAC (law school admission).

It is important to note that, as a rule, common fairness classification methods are not designed to control the fairness-accuracy trade-off explicitly. Instead, in most cases, these methods rely on rerunning the algorithm for a wide range of hyperparameter settings, in the hope that different hyperparameters would result in different fairness-accuracy trade-off points. However, there typically is no direct known and controlled relation between hyperpatameters and the obtained fairness-accuracy trade-off. 
For FairGBM, we utilized the hyperparameter ranges specified in the original paper,  \cite{cruz2023fairgbmgradientboostingfairness}. In the case of the balancing post-processing method, we conducted a grid search over the full range of all possible hyperparameters to ensure a comprehensive analysis.

Figure \ref{fig:pareto_classifiers} shows the MIFPO computed Pareto front, and  all hyperparameter runs of the two algorithms above, with accuracy evaluated on the test set.  These experiments demonstrate the following two points: \textbf{(a)}
The standard classifiers achieve a considerably lower accuracy than what is theoretically possible at a given fairness level. \textbf{(b)} the existing methods are also unable to present solutions for the full range of the statistical parity values. The values from the FGBM and the post-processing algorithms all have statistical parity $\leq 0.2$. 
Similarly to to the case of fair representations, these results emphasize the limitations of current fair classifiers in achieving optimal trade-offs between accuracy and fairness across the full range of fairness values.

\begin{figure}
    \centering
    \includegraphics[width=1\linewidth]{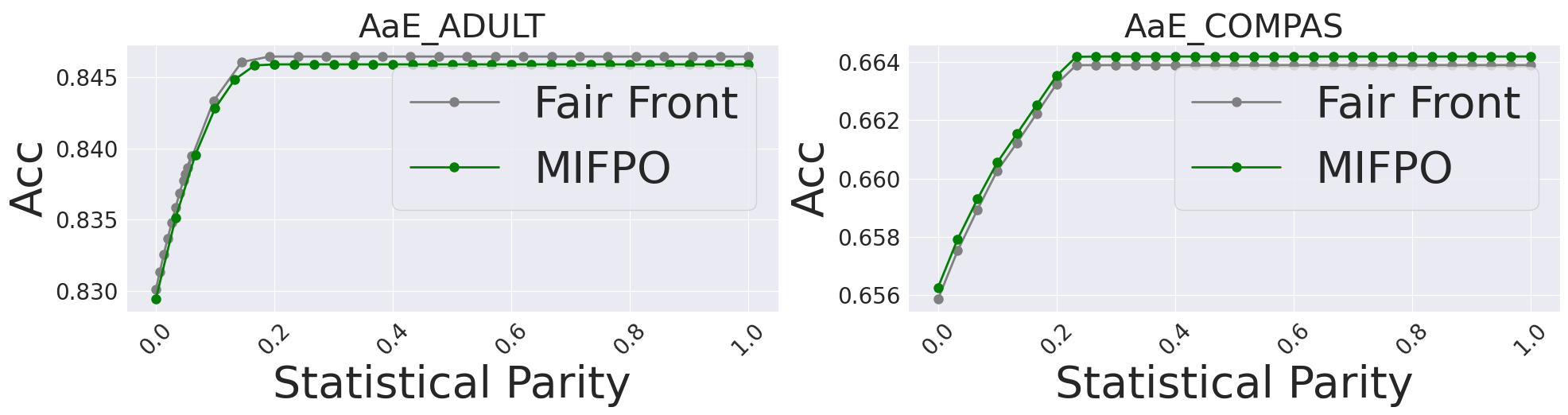}
    \caption{Comparing FairFront to MIFPO accuracy-fairness tradeoff on two curated datasets.}
    \label{fig:fairfront}
\end{figure}

Additionally, we add a comparison to FairFront \citep{rebut1}, which, similarly to our work,  depends on estimates of $P(Y|X)$. See the discussion in Section \ref{sec:literature}. We note, however, that the utility of this comparison is limited due to the very strict setup defined in that paper, which cannot be applied to natural datasets. Specifically: 1) The implementation in \cite{wang2023aleatoric} requires creating a finite discrete space \emph{by binning over the full $\RR^d$ feature space}\footnote{This is completely different from the discretizaton of $\Delta_{\mcY}$ used in MIFPO.   The space $\RR^d$ is the feature space, and is much larger than $\Delta_{\mcY}$.}, which allows for perfect modeling of the distribution $P(Y|X)$ (by simple counting). This is not normally possible on real datasets in practice. 2) The binning itself was performed manually for each dataset separately, and we were unable to discern the logic behind the selected parameters. 3) To allow reasonable binnig, the true dimensions of the data are manually reduced, again by picking features manually for each dataset.  Due to these issues, it was practically impossible to apply the method in \cite{wang2023aleatoric} to  other standard datasets, or even to the datasets used in \cite{wang2023aleatoric} but with standard features. 

Nonetheless, for the sake of comparison, we
compared MIFPO and FairFront on the same restricted, preprocessed version of the datasets that FairFront used, with the same bin-based probability estimator. The results are shown in Figure \ref{fig:fairfront}.


\subsection{Minimization of Concave Functions under Convex Constraints and the Entropy Loss}
\label{sec:supp_concave_in_dccp}

As described in figure \ref{sec:experiments}, we used the disciplined convex concave programming (DCCP) framework and the associated solver, \citep{shen2016disciplined, disciplined_github} for solving the concave minimization with convex constraints problem. 

Minimizing concave functions under convex constraints is a common problem in optimization theory. Unlike convex optimization where global minima can be readily found, in concave minimization problems we only know that the local minimas lie on the boundaries of the feasible region defined by the convex constraints. While techniques such as branch-and-bound algorithms, cutting plane methods, and heuristic approaches are often employed, here we used the framework of DCCP which gain a lot of popularity in recent years.

The DCCP framework extends disciplined convex programming (DCP) to handle nonconvex problems with objective and constraint functions composed of convex and concave terms. The idea behind a "disciplined" methodology for convex optimization is to impose a set of conventions inspired by basic principles of convex analysis and the practices of convex optimization experts. These "disciplined" conventions, while simple and teachable, allow much of the manipulation and transformation required for analyzing and solving convex programs to be automated. DCCP builds upon this idea, providing an organized heuristic approach for solving a broader class of nonconvex problems by combining DCP principles with convex-concave programming (CCP) methods, and is implemented as an extension to the CVXPY package in Python.

While convenient, the use of the disciplined framework bears some limitations. Mainly, generic operations like element-wise multiplication are not under the allowed set of operations (and for obvious reasons), which limits the usability. Notice, that for the prediction accuracy measure $h(p) = \min(p, 1-p)$ this is not a problem, but for the entropy classification error $h((1-p, p)) = -p \log p - (1-p) \log(1-p)$, this is more challenging. Nevertheless, here we show that the standard DCCP framework allows for entropy classification error.

\begin{lemma}
Let $a = (1 - \alpha) \mathcal{V}(v) r_{v,z}$ and $b = \alpha \cdot \mathcal{U}(u) r_{u,z}$, with $p_v = p_a$ and $p_u = p_b$.

We can write the cost function under the entropy accuracy error as:
\begin{align*}
(a + b) \cdot \mathrm{entropy}&\left( \frac{a \cdot p_a + b \cdot p_b}{a + b} \right) \\ &=  \mathrm{entropy}(a \cdot (1-p_a) + b \cdot (1 - p_b)) + \mathrm{entropy}(a \cdot p_a + b \cdot p_b) - \mathrm{entropy}(a + b)
\end{align*}
\end{lemma}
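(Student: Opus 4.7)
The plan is a direct algebraic computation. The first step is to identify what ``entropy'' means when applied to a scalar argument. Since the lemma asserts the identity for the binary entropy classification error $h((1-p,p)) = -p\log p - (1-p)\log(1-p)$, the natural (and only consistent) reading is $\mathrm{entropy}(x) := -x\log x$ for a nonnegative scalar $x$, so that $H(p) := \mathrm{entropy}(p) + \mathrm{entropy}(1-p)$ recovers the binary entropy on the two-component distribution $(p,1-p)$. I will use the shorthand $\eta(x) := -x\log x$ throughout.

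Next, I introduce $q := (ap_a + bp_b)/(a+b)$, so that $1 - q = (a(1-p_a) + b(1-p_b))/(a+b)$; the degenerate case $a=b=0$ is trivial and can be discarded with the usual convention $0\log 0 = 0$. The left-hand side then reads
\begin{equation*}
(a+b)\,H(q) \;=\; (a+b)\,\eta(q) \;+\; (a+b)\,\eta(1-q).
\end{equation*}

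The key tool is the one-line scaling identity
\begin{equation*}
(a+b)\,\eta\!\left(\tfrac{u}{a+b}\right) \;=\; -u\log\tfrac{u}{a+b} \;=\; \eta(u) \;+\; u\log(a+b),
\end{equation*}
which I apply with $u_1 = a p_a + b p_b$ and $u_2 = a(1-p_a) + b(1-p_b)$. Adding the two instances gives
\begin{equation*}
(a+b)H(q) \;=\; \eta(u_1) + \eta(u_2) + (u_1 + u_2)\log(a+b).
\end{equation*}
Since $u_1 + u_2 = a + b$, the last term equals $(a+b)\log(a+b) = -\eta(a+b)$, producing exactly the right-hand side of the lemma.

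I do not expect a real obstacle here: the argument is a mechanical rearrangement using only the scaling rule for $-x\log x$. The only points requiring a sentence of care are (i) fixing the scalar-entropy convention so that the identity $H(p) = \eta(p) + \eta(1-p)$ is available, and (ii) briefly noting that the manipulation respects the DCP/DCCP rules the lemma is meant to enable, namely that each of $\eta(u_1)$, $\eta(u_2)$, $\eta(a+b)$ is an affine-argument application of the concave scalar function $-x\log x$, so the expression $\eta(u_1)+\eta(u_2)-\eta(a+b)$ is a difference of concaves in the DCCP sense and can be declared to the solver without recourse to elementwise products.
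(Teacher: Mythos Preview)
Your proof is correct and, like the paper's, is a direct algebraic verification; both ultimately reduce to expanding $-x\log x$ terms and regrouping. The paper brute-force expands the binary entropy of $q=(ap_a+bp_b)/(a+b)$ and then collects terms, whereas you isolate the single scaling identity $(a+b)\,\eta(u/(a+b))=\eta(u)+u\log(a+b)$ and apply it twice, which is tidier and makes the cancellation via $u_1+u_2=a+b$ transparent. You also correctly identify the paper's notational overload (binary entropy on the left, scalar $-x\log x$ on the right), which is exactly the disambiguation needed for the identity to hold; your closing remark about DCCP compatibility is apt and matches the lemma's purpose.
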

\begin{proof}
\begin{align*}
    \mathrm{entropy}(x) &= \mathrm{entropy}(x, 1 - x) = x \left[ \log(x) - \log(1 - x) \right] + \log(1 - x)
\end{align*}

Thus, 
\begin{align*}
    \mathrm{entropy}\left( \frac{a \cdot p_a + b \cdot p_b}{a + b} \right) &= \frac{a \cdot p_a + b \cdot p_b}{a + b} \left[ \log\left( \frac{a \cdot p_a + b \cdot p_b}{a + b} \right) - \log\left( 1 - \frac{a \cdot p_a + b \cdot p_b}{a + b} \right) \right] \\
    &\quad + \log\left( 1 - \frac{a \cdot p_a + b \cdot p_b}{a + b} \right) \\ 
    &= \frac{a \cdot p_a + b \cdot p_b}{a + b} \left[ \log(a \cdot p_a + b \cdot p_b) - \log(a + b - a \cdot p_a - b \cdot p_b) \right] \\
    &\quad + \log(a + b - a \cdot p_a - b \cdot p_b) - \log(a + b)
\end{align*}

\noindent Hence :
\begin{align*}
    (a + b) \cdot \mathrm{entropy}\left( \frac{a \cdot p_a + b \cdot p_b}{a + b} \right) &= (a + b) \cdot \left[ \frac{a \cdot p_a + b \cdot p_b}{a + b} \left[ \log(a \cdot p_a + b \cdot p_b) - \log(a + b - a \cdot p_a - b \cdot p_b) \right] \right] \\
    &\quad + (a + b) \cdot \log(a + b - a \cdot p_a - b \cdot p_b) \\
    &\quad - (a + b) \cdot \log(a + b)
\end{align*}

\begin{align*}
    &= \mathrm{entropy}(a \cdot p_a + b \cdot p_b) - (a \cdot p_a + b \cdot p_b) \cdot \log(a + b - a \cdot p_a - b \cdot p_b) \\
    &\quad + (a + b) \cdot \log(a + b - a \cdot p_a - b \cdot p_b) - \mathrm{entropy}(a + b)
\end{align*}

Finally, the expression can be written as:
$$ = \mathrm{entropy}(a + b - a \cdot p_a - b \cdot p_b) + \mathrm{entropy}(a \cdot p_a + b \cdot p_b) - \mathrm{entropy}(a + b)$$

$$ = \mathrm{entropy}(a \cdot (1-p_a) + b \cdot (1 - p_b)) + \mathrm{entropy}(a \cdot p_a + b \cdot p_b) - \mathrm{entropy}(a + b)$$
\end{proof}

Given the element-wise entropy function $x \cdot (1-x)$ is with known characteristics and under the dccp framework, we can use the entropy error for our cost using : 
\begin{align*}
  =& \mathrm{entropy}((1 - \alpha) \mathcal{V}(v) r_{v,z} \cdot (1-p_v) + \alpha \cdot \mathcal{U}(u) r_{u,z} \cdot (1 - p_u))   \\
  & -\mathrm{entropy}((1 - \alpha) \mathcal{V}(v) r_{v,z} \cdot p_v + \alpha \cdot \mathcal{U}(u) r_{u,z} \cdot p_u) \\
  &  - \mathrm{entropy}((1 - \alpha) \mathcal{V}(v) r_{v,z} + \alpha \cdot \mathcal{U}(u) r_{u,z})
\end{align*}

\section{Fair Classifiers As Fair Representations}
\label{supp:fair_classifiers}

As discussed in Section \ref{sec:problem_setting}, 
Pareto front of binary classifiers with statistical parity can be computed from the Pareto front of representations with total variation fairness distance. In this Section we provide the proof of this result, Lemma \ref{lemma:cls}. The Lemma and the related notation are restated here for convenience. 

For a binary classifier $\hat{Y}$ of $Y$, its prediction error is defined as usual by $\epsilon(\hat{Y}) := \Prob{\hat{Y} \neq Y}$. 
The statistical parity \emph{distance} of $\hat{Y}$ is defined as 
\begin{equation}
    \Delta_{SP}(\hat{Y}) := \Abs{\Prob{\hat{Y} = 1\cond A = 1} - \Prob{\hat{Y} = 1 \cond A = 0} }.
\end{equation}
Let the uncertainy measure $h$ be defined by \eqref{eq:accuraccy_h_definition}. Note that the first part of the Lemma does use the special properties of this $h$ and does not necessarily hold for other costs $h$. 
\begin{lemma}
    Let $\hat{Y}$ be a classifier of $Y$. Then there is a representation 
    given by a random variable $Z$ on a set $\mcZ$ with $\Abs{\mcZ} = 2$, such that 
\begin{equation}
\label{eq:class_repr_lem_eq_1}
    \Expsubidx{z \sim Z}{h(\Prob{Y \cond Z = z})} \leq \epsilon(\hat{Y}) 
    \text{ and } \norm{\mu_0 - \mu_1}_{TV} \leq \Delta_{SP}(\hat{Y}). 
\end{equation}    
Conversely, for any given representation $Z$, there is a classifier 
$\hat{Y}$ of $Y$ as a function of $Z$ (and thus of $(X,A)$), such that 
\begin{equation}
\label{eq:class_repr_lem_eq_2}
    \epsilon(\hat{Y}) \leq \Expsubidx{z \sim Z}{h(\Prob{Y \cond Z = z})}   
    \text{ and } \Delta_{SP}(\hat{Y}) \leq \norm{\mu_0 - \mu_1}_{TV}.     
\end{equation}
\end{lemma}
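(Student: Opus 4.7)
The two directions are largely symmetric, with natural canonical constructions on each side. For the forward direction I will take the representation to simply be the classifier itself, $Z := \hat{Y}$, living on $\mcZ = \Set{0,1}$. This is a valid representation in the sense of Section \ref{sec:problem_setting}, because $\hat{Y}$ is a function of $(X,A)$, so $Z \indep Y \cond (X,A)$ holds trivially. For the reverse direction I will take the Bayes classifier from $Z$, namely $\hat{Y}(z) := \argmax_{y\in \mcY} \Prob{Y=y \cond Z=z}$; since $\mcY=\Set{0,1}$, this is the indicator of $\Prob{Y=1\cond Z=z}\geq \half$.

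For the forward direction, with $Z=\hat{Y}$ the performance cost becomes $\sum_{z\in \Set{0,1}} \Prob{Z=z}\, h(\Prob{Y\cond Z=z})$ where $h(p)=\min(p,1-p)$. Since $h(\Prob{Y\cond Z=z}) \leq \Prob{Y \neq z \cond Z=z}$ for each $z$, summing against $\Prob{Z=z}$ yields exactly $\epsilon(\hat{Y})$, giving the first inequality in \eqref{eq:class_repr_lem_eq_1}. For the fairness side, $\mu_a(z) = \Prob{\hat{Y}=z \cond A=a}$, and a direct computation gives
\begin{equation}
\norm{\mu_0-\mu_1}_{TV} = \half \sum_{z\in \Set{0,1}} \Abs{\mu_0(z)-\mu_1(z)} = \Abs{\Prob{\hat{Y}=1\cond A=0}-\Prob{\hat{Y}=1\cond A=1}} = \Delta_{SP}(\hat{Y}),
\end{equation}
using that the $z=0$ and $z=1$ summands are equal. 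In fact we get equality, which is stronger than required.

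For the reverse direction, with $\hat{Y}$ the Bayes classifier from $Z$, for each fixed $z$ the conditional error equals $\min(\Prob{Y=0\cond Z=z},\Prob{Y=1\cond Z=z}) = h(\Prob{Y\cond Z=z})$, so integrating gives $\epsilon(\hat{Y}) = \Expsubidx{z\sim Z}{h(\Prob{Y\cond Z=z})}$ (again, equality is obtained). For statistical parity, let $S = \Set{z\in \mcZ \setsep \hat{Y}(z)=1}$. Then $\Prob{\hat{Y}=1\cond A=a} = \mu_a(S)$, so $\Delta_{SP}(\hat{Y}) = \Abs{\mu_0(S)-\mu_1(S)}$. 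Applying the variational definition \eqref{eq:tv_norm_definition} with the test function $g=\Ind{z\in S}-\Ind{z\notin S}$, which has $\norm{g}_\infty=1$, yields $\mu_0(S)-\mu_1(S) \leq \norm{\mu_0-\mu_1}_{TV}$, and swapping signs gives the absolute value bound.

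The proof involves no real obstacle; the only subtlety is being careful with the factor $\half$ in the definition of TV and the corresponding variational formula, which is why the set-difference bound $\Abs{\mu_0(S)-\mu_1(S)}\leq \norm{\mu_0-\mu_1}_{TV}$ holds without a factor (the indicator-minus-indicator test function supplies the missing $2$). Everything else reduces to bookkeeping with the definitions of $h$, $\epsilon(\hat{Y})$, $\Delta_{SP}(\hat{Y})$ and $\mu_a$.
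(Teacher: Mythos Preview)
Your proposal is correct and follows essentially the same approach as the paper: both take $Z := \hat{Y}$ for the forward direction and the Bayes classifier $\hat{Y}(z) = \argmax_y \Prob{Y=y\cond Z=z}$ for the converse, with the fairness inequalities in each direction obtained from the variational characterisation of total variation. Your treatment of the $\half$ factor in the converse fairness bound, via the test function $g = \Ind{z\in S} - \Ind{z\notin S}$, is in fact cleaner than the paper's version, which uses $f(z)=\Prob{\hat{Y}=1\cond Z=z}$ directly and glosses over the normalisation.
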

\begin{proof}
Let us begin with the second part of the Lemma, inequalities \eqref{eq:class_repr_lem_eq_2}.
Given a representation $Z$, $\epsilon(\hat{Y}) \leq \Expsubidx{z \sim Z}{h(\Prob{Y \cond Z = z})}$ follows since $\Expsubidx{z \sim Z}{h(\Prob{Y \cond Z = z})}$ is the error of the optimal classifier of $Y$ as a function of $Z$. We choose $\hat{Y}$ to be such an optimal classifier and thus satisfy the above inequality, with equality. 
Next, the second inequality in \eqref{eq:class_repr_lem_eq_2} holds for 
 for \emph{any} classifier $\hat{Y}$ derived from $Z$. 
The argument below is a slight generalisation of the argument in \cite{madras2018learning}. 
Define $f(z) = \Prob{\hat{Y} =1 \cond Z=z}$. 
Note that for $a \in \Set{0,1}$, $\Prob{\hat{Y} =1 \cond A=a} = \int f(z) d\mu_a(z)$. Thus 
\begin{flalign}
    \Prob{\hat{Y} =1 \cond A=0} - \Prob{\hat{Y} =1 \cond A=1} &= 
    \int f(z) d\mu_0(z) - \int f(z) d\mu_1(z) \\ 
    &\leq \sup_{g \cond \Abs{g}\leq 1}
     \Abs {\int g(z) d\mu_0(z) - \int g(z) d\mu_1(z) } \\ 
     &= \norm{\mu_0 - \mu_1}_{TV},     
\end{flalign}
where we have used $\Abs{f} \leq 1$ in the second line.
Repeating the argument also for $\Prob{\hat{Y} =1 \cond A=1} - \Prob{\hat{Y} =1 \cond A=0}$, we obtain the second inequality in \eqref{eq:class_repr_lem_eq_2}.

We now turn to the first statement, \eqref{eq:class_repr_lem_eq_1}. 
Let $\hat{Y}$ be a classifier of $Y$ as a function of $(X,A)$. 
Observe that thus by definition  $\Prob{\hat{Y} \cond X,A}$  induces a distribution on the set 
$\Set{0,1}$, and thus may be considered as a representation $Z := \hat{Y}$ of $(X,A)$ on that set. We now relate the properties of this $Z$ as a representation to the quantities $\epsilon(\hat{Y})$ and $\Delta_{SP}(\hat{Y})$. Similarly to the argument above, the first part of \eqref{eq:class_repr_lem_eq_1} follows since  
$\Expsubidx{z \sim Z}{h(\Prob{Y \cond Z = z})}$ is the best possible error over all classifiers.  
For the second part, note that since $\hat{Y}$ is binary, we have 
\begin{equation}
\label{eq:lem_cls_repr_binary_yhat}
    \Prob{\hat{Y} = 1\cond A=0} - \Prob{\hat{Y} = 1\cond A=1} = 
    -\Prob{\hat{Y} = 0\cond A=0} + \Prob{\hat{Y} = 0\cond A=1}. 
\end{equation}
It follows that 
\begin{flalign}
    \norm{\mu_0 - \mu_1}_{TV} &= \half 
    \sum_{v \in \Set{0,1}} 
    \Abs{\Prob{\hat{Y} = v\cond A=0} - \Prob{\hat{Y} = v\cond A=1}} \\ 
    &= \Abs{\Prob{\hat{Y} = 1\cond A=0} - \Prob{\hat{Y} = 1\cond A=1}}  \\ 
    &= \Delta_{SP}(\hat{Y}),
\end{flalign}
where we have used \eqref{eq:lem_cls_repr_binary_yhat} for the second to third line transition. This completes the proof of the second part of \eqref{eq:class_repr_lem_eq_1}.
\end{proof}

\section{Computational Complexities}
\label{sec:computational_complexities}

In this Section we discuss alternative discretization schemes for the MIFPO algorithm.  We also discuss various complexity  aspects of the classification algorithms \cite{xian2023fair} and \cite{wang2023aleatoric} and relate them to the complexity of MIFPO.

In Section \ref{sec:practical_construction_details} we described a data independent discretization of $\Delta_\mcY$ by binning. While effective for small label sets $\mcY$, larger sets would require a different approach. 
One alternative is to cluster the data instead of binning $\Delta_{\mcY}$ itself. Indeed, by choosing cluster centers 
$\Set{\eta_i}_{1}^M \subset \Delta_{\mcY}$, such that each data point $\Set{f^*(x,a)}_{(x,a)\in D}$ is well approximated by one the centers (or just most points are approximated), we can guarantee arbitrarily good approximation of the true Pareto front. 
The cardinality $M$ of such clustering would depend on the intrinsic dimension of the data, which we typically expect to be lower than the full dimension of $\Delta_{\mcY}$, due to the Manifold Hypothesis, \cite{fefferman2016testing}.

Another possibility is to not use any explicit discretization, and instead to use each data point as a separate bin (equivalently, we use the points themselves as the cluster centers $\Set{\eta_i}$). In this case, the complexity scales with the size of the dataset, but not with the dimensions of $\Delta_{\mcY}$. The MIFPO construction in Section \ref{sec:mifpo_construction} implies that MIFPO in this case would have $O(N^2)$ variables for a dataset of size $N$. While not applicable for large $N$, this is similar to the complexity of a variety of often used algorithms. 
Classical example of such complexity is the Spectral Clustering. We also observe that \cite{xian2023fair}, a recent state of the art fairness classification algorithm mentioned above, also has such complexity.

Indeed, the approach in \cite{xian2023fair} involves the computation of a certain transportation plan between data points, and the encoding of such plans also requires $O(N^2)$ variables. 
Thus problem sizes occuring in MIFPO would be smaller or equal to those in \cite{xian2023fair}, despite the fact that MIFPO is solving a considerably more general problem (see Section \ref{sec:literature}).

Finally, the algorithm in \cite{wang2023aleatoric} involves 
optimization in the space of confusion matrices, with 
dimensions of size $\Abs{\mcA}\cdot \Abs{\mcY}\times \Abs{\mcY}$. As discussed in Section \ref{sec:literature}, the reduction of the problem to confusion matrices of possible due to special properties of the classification problem and the accuracy loss. 

The algorithm  in \cite{wang2023aleatoric}, FairFront, is an iterative algorithm,  where each iteration involves solving a certain \emph{difference-of-convex} (DC) program which is constructed from a full dataset. The class of DC programs is equivalent to that convex-concave programs considered in this paper (see \cite{shen2016disciplined}). In fact, similarly to MIFPO, the algorithm in \cite{wang2023aleatoric} also uses DCCP, \cite{shen2016disciplined}, although applied to a different problem. 

In each iteration, the solution of DC program is then used to add new constraints to a certain main convex program. While it is proved that asymptotically this process converges to the optimal front, there are no bounds on the number of iterations. This may lead to the convex solver crashing due to too many constraints, and in fact we have observed such crashes in our evaluation.

To summarize \footnote{In this Section we have discussed the theoretical complexity aspects of FairFront. Additional details pertaining to the official implementation of \cite{wang2023aleatoric} may be found in Section \ref{sec:supp_experiments_additional}.}, MIFPO  involves solving \emph{one} convex-concave problem, with size which may be independent of the data size. 
In contrast, FairFront involves iteratively solving convex-concave problems and a main convex program, where the number of terms in the objective of each convex-concave problem scales with the size of the data, and the number of constraints grows 
in the convex problem grows with iterations, thus making the iterations progressively harder.

\section{Factorization}
\label{sec:factorization_proofs}

In this Section we the factorization result, Lemma \ref{lem:factorization}. We restate the result for convenience. 

Recall that $f^*(x,a)$ denotes the Bayes optimal classifier of $Y$, i.e. $f^* : \mcX \times \mcA \rightarrow \Delta_{\mcY}$ is given by  
\begin{equation}
    f^*(x,a) := \Prob{Y = \cdot \cond X=x,A=a}.
\end{equation}

We define a new variable $X'$, with values in $\Delta_{\mcY}$, 
by $X' := f^*(X,A)$. 

\begin{lemma}[Factorization]
\label{lem:factorization_ver_proof}
    For any representation $Z$ of $(X,A)$, there is a representation $Z'$ of $(X',A)$, such that 
    \begin{equation}
        \Expsubidx{z' \sim Z'}{h(\Prob{Y \cond Z' = z'})} = 
        \Expsubidx{z \sim Z}{h(\Prob{Y \cond Z = z})} 
        \text{ and }
        D_{TV}(Z') \leq D_{TV}(Z).
    \end{equation}
\end{lemma}
\begin{proof}
The representation $Z'$ of $X',A$  will be defined as follows:
For $\sigma \in \Delta_{\mcY}, a\in \mcA, z\in \mcZ$ set:
\begin{flalign}
    \Prob{Z' = z \cond X' = \sigma, A=a} &:= 
    \Prob{Z = z \cond X' = \sigma, A=a}  \\ 
    &= \Prob{Z = z \cond f^*(X,a) = \sigma, A=a}
\end{flalign}
where the second line is the definition and is added for clarity. 
To see intuition behind this definition note that we have
\begin{flalign}
  \Prob{Z' = z \cond X' = \sigma, A=a} &=  
  \Prob{Z = z \cond f^*(X,a) = \sigma, A=a}  \label{eq:eq:factr_proof_minus2} \\ 
  &=\sum_{x \in \mcX} \Prob{Z = z \cond X=x, f^*(x,a) = \sigma, A=a} 
  \Prob{X = x \cond f^*(x,a) = \sigma, A=a} \notag \\ 
  &=\sum_{x\in \mcX} \Prob{Z = z \cond X=x, A=a} 
  \Prob{X = x \cond f^*(x,a) = \sigma, A=a} \label{eq:eq:factr_proof_minus1}
\end{flalign}
In words, for a fixed $a$, to compute $\Prob{Z' = z \cond X' = \sigma, A=a}$ we effectively collect all $x$ such that $f^*(x,a) = \sigma$ and average all of their representations.  Equivalently, all points $x$ with the same $\sigma$ are merged into one point, and their representations summed up according to their relative weight. 

We will now show that neither the performance  nor the fairness condition change under this operation. 

Since $D_{TV}(Z)$ is defined solely in terms of the distributions $\Prob{Z = \cdot \cond A = a}$, 
to show that $D_{TV}(Z) = D_{TV}(Z')$ it is enough to show that  
\begin{equation}
\Prob{Z = \cdot \cond A = a} = \Prob{Z' = \cdot \cond A = a} 
\spaceo \forall a \in \mcA.
\end{equation}
To this end, we have 
\begin{flalign}
 \Prob{Z' = z \cond A = a} &= 
 \sum_{\sigma} 
 \Prob{Z' = z \cond X' = \sigma, A = a} \Prob{X' = \sigma \cond A = a} \label{eq:factr_proof_1} \\
 &= \sum_{\sigma} \Prob{Z = z \cond X' = \sigma, A = a} \Prob{X' = \sigma \cond A = a}  \label{eq:factr_proof_2}  \\
 &=  \Prob{Z = z \cond A = a}. 
\end{flalign}
where the transition \eqref{eq:factr_proof_1} to \eqref{eq:factr_proof_2} is by the definition of $Z'$. 
Thus we have shown that $D_{TV}(Z) = D_{TV}(Z')$.

Next, note that the above  argument implies also that  $\Prob{Z=z} = \Prob{Z'=z}$. Thus, in order to show  the performance equality, 
\begin{equation}
\Expsubidx{z' \sim Z'}{h(\Prob{Y \cond Z' = z'})} = 
        \Expsubidx{z \sim Z}{h(\Prob{Y \cond Z = z})}, 
\end{equation}
it is enough to show that $\Prob{Y \cond Z' = z} = \Prob{Y \cond Z = z}$ for every $z\in \mcZ$. Further, again since 
$\Prob{Z=\cdot} = \Prob{Z'=\cdot }$, we can show that 
\begin{equation}
    \Prob{Y=y , Z' = z} = \Prob{Y=y , Z = z}
\end{equation}
for all $z\in \mcZ,y\in \mcY$.
Write 
\begin{flalign}
    &\Prob{Y=y, Z = z} \label{eq:factr_proofz_1}\\ 
    &= \sum_{a} \sum_{x} \Prob{Y=y, X=x, A=a, Z = z} \label{eq:factr_proofz_2}\\ 
    &= 
    \sum_{a} \sum_{x} \Prob{Y=y \cond X=x, A=a, Z = z}  
    \Prob{X=x, A=a, Z = z} \label{eq:factr_proofz_3}\\
    &= \sum_{a} \sum_{x} \Prob{Y=y \cond X=x, A=a, Z = z}  
    \Prob{Z=z \cond X=x, A=a} \Prob{X=x, A=a} \label{eq:factr_proofz_4}\\     
    &= \sum_{a} \sum_{x} \Prob{Y=y \cond X=x, A=a}  
    \Prob{Z=z \cond X=x, A=a} \Prob{X=x, A=a} \label{eq:factr_proofz_5}\\ 
    &= \sum_{a} \sum_{\sigma}
    \sum_{x | f^*(x,a) = \sigma} \Prob{Y=y \cond X=x, A=a}  
    \Prob{Z=z \cond X=x, A=a} \Prob{X=x, A=a} \label{eq:factr_proofz_6}\\ 
    &= \sum_{a} \sum_{\sigma}
    \sigma(y) \Prob{X' = \sigma, A= a}
    \sum_{x | f^*(x,a) = \sigma}   
    \Prob{Z=z \cond X=x, A=a} 
    \frac{\Prob{X=x, A=a}}{\Prob{X' = \sigma, A= a}} \label{eq:factr_proofz_7}\\
    &= \sum_{a} \sum_{\sigma}
    \sigma(y) \Prob{X' = \sigma, A= a}
    \sum_{x | f^*(x,a) = \sigma}   
    \Prob{Z=z \cond X=x, A=a} 
    \Prob{X=x \cond X' = \sigma, A= a)} \label{eq:factr_proofz_8}\\ 
    &= \sum_{a} \sum_{\sigma}
    \sigma(y) \Prob{X' = \sigma, A= a}
    \Prob{Z' = z \cond  X' = \sigma, A= a} \label{eq:factr_proofz_9}\\
    &= \sum_{a} \sum_{\sigma}
    \sigma(y) \Prob{Z' = z,  X' = \sigma, A= a} \label{eq:factr_proofz_10} \\
    &= \sum_{a} \sum_{\sigma}
    \Prob{Y=y \cond X' = \sigma, A = a} \Prob{Z' = z,  X' = \sigma, A= a} \label{eq:factr_proofz_11} \\
    &= \Prob{Y=y , Z'=z}. \label{eq:factr_proofz_12}
\end{flalign}
Here, the transition 
\eqref{eq:factr_proofz_4}-\eqref{eq:factr_proofz_5} is due to the independence condition \eqref{eq:representation_y_indep_condition}. 
On line \eqref{eq:factr_proofz_6} we split the sum over $x$ into sum over subsets $\Set{x \setsep f^*(x,a) = \sigma}$ and 
an outer some over all $\sigma$. The transition 
\eqref{eq:factr_proofz_8}-\eqref{eq:factr_proofz_9} is due to
the equality \eqref{eq:eq:factr_proof_minus1}-\eqref{eq:eq:factr_proof_minus2}. 
Finally, for the transition 
\eqref{eq:factr_proofz_9}-\eqref{eq:factr_proofz_10}, we have used the fact that 
\begin{equation}
\label{eq:factr_proof_final}
    \sigma(y) = \Prob{Y=y \cond X' = \sigma, A= a},
\end{equation}
which holds by definition of $X'$. 
Similarly to the earlier discussion on merging of $x$ with similar value of $f^*$, the above argument proceeded by 
regrouping the summation over $x$ by the value of $f^*(x,a)$.
The computation thus showed that this process yields the definition of $Z'$. In particular, this regrouping process 
and equation \eqref{eq:factr_proof_final} explain why the space 
$\Delta_{\mcY}$ is special and all representations may be factored through it. 
This completes the proof of the Lemma. 
\end{proof}

In the above argument we have used the summation over $\sigma$, 
i.e. $\sum_{\sigma} \ldots$. This is formally possible when $(X,A)$ has a discreet distribution. The full general case may be obtained simply by replacing the summation by integration and conditioning on $\sigma$ by the general conditional expectation operator. 

\end{document}